\newcommand\vldbavailabilityurl{https://github.com/SleemJunior/gradient_uniqueness/tree/main}
\newcommand\vldbpagestyle{plain} 
\newcommand{\cmark}{\ding{51}}
\newcommand{\xmark}{\ding{55}}
\begin{document}
\title{Auditing Information Disclosure During LLM-Scale Gradient Descent Using Gradient Uniqueness}

\author{Sleem Abdelghafar}
\affiliation{%
  \institution{Rice University}
  \city{Houston}
  \country{USA}
}
\email{msm15@rice.edu}

\author{Maryam Aliakbarpour}
\affiliation{%
  \institution{Rice University}
  \city{Houston}
  \country{USA}
}
\email{maryama@rice.edu}

\author{Chris Jermaine}
\affiliation{%
  \institution{Rice University}
  \city{Houston}
  \country{USA}
}
\email{cmj4@rice.edu}




\begin{abstract}
Disclosing information via the publication of a machine learning model poses significant privacy risks. However, auditing this disclosure across every datapoint during the training of Large Language Models (LLMs) is computationally prohibitive. In this paper, we present Gradient Uniqueness (GNQ), a principled, attack-agnostic metric derived from an information-theoretic upper bound on the amount of information embedded in a model about individual training points via gradient descent. While naively computing GNQ requires forming and inverting an  $P \times P$ matrix for every datapoint (for a model with $P$ parameters), we introduce Batch-Space Ghost GNQ (BS-Ghost GNQ). This efficient algorithm performs all computations in a much smaller batch-space and leverages ghost kernels to compute GNQ ``in-run'' with minimal computational overhead. We empirically validate that GNQ successfully accounts for prior/common knowledge. Our evaluation demonstrates that GNQ strongly predicts sequence extractability in targeted attacks and reveals how disclosure risk concentrates heterogeneously on specific examples over the course of LLM training. 
\end{abstract}



\maketitle

\pagestyle{\vldbpagestyle}

\ifdefempty{\vldbavailabilityurl}{}{
\vspace{.3cm}
\begingroup\small\noindent\raggedright\textbf{Artifact Availability:}\\
The source code, data, and/or other artifacts have been made available at \url{\vldbavailabilityurl}.
\endgroup
}
\section{Introduction}

After training, a deployed model may reveal information about specific training datapoints.
This can happen when analysis or querying of the model can reveal training text verbatim or near-verbatim \citep{carlini2019secretsharer,carlini2021extracting,nasr2023scalable,hayes2025probabilistic},
or when personally identifiable information (PII) is embedded in model output
(e.g., leaking a phone number inside a new sentence) \citep{zhou2023entity,kim2023propile,lukas2023pii}.
Such disclosures enable training-data extraction and related privacy attacks \citep{ishihara2023survey,wang2024pandoraswhitebox},
may violate contractual or legal constraints on data usage \citep{henderson2023fairuse,cooper2025files},
and can undermine trust in real deployments \citep{bender2021stochastic,bommasani2021foundation}.
For these reasons, it is important to be able to \emph{audit} how much disclosure an AI model poses.

We consider the problem of designing a practical approach for auditing the extent to which an AI model encodes information about each of the datapoints it was trained on---where ``information'' is defined formally, via an information-theoretic analysis.
Our goal is to design an auditing framework that meets the following requirements:

\vspace{5 pt}
\noindent
\emph{(C1) Auditing should be attack-agnostic.}
Measuring the extent to which a model resists a given \emph{attack} is perhaps the most common way to audit disclosure.
Examples include prompting-based extraction and string-match pipelines \citep{carlini2021extracting,nasr2023scalable,hayes2025probabilistic},
membership inference attacks (MIAs) that predict whether a candidate datapoint was in the training set \citep{shokri2017mia,mattern2023neighbourhood,xie2024recall},
and other reconstruction or PII-focused probes \citep{lukas2023pii,kim2023propile}.
However, such methods are attack-specific: failure of one attack does not imply safety against others \citep{song2021systematicprivacy,carlini2022lira,ishihara2023survey,wang2024pandoraswhitebox}.

\vspace{5 pt}
\noindent
\emph{(C2) Auditing should be low-cost, ``in-run'', and cover every training datapoint.}
Especially for LLM-scale datasets and models, it is impractical for auditing to require post-hoc analysis of the model with respect to every training datapoint,
or heavyweight procedures that scale with the number of prompts, generations, or training-set substrings \citep{nasr2023scalable,hayes2025probabilistic}.
Auditing should take place as the model is constructed (``in-run''), introduce a low computational and memory overhead, and produce a score for every training datapoint.

\vspace{5 pt}
\noindent
\emph{(C3) Auditing should not modify the baseline training and should audit the actual trained model.}
In realistic LLM settings, training is expensive and recipes are tightly tuned; practitioners are reluctant to alter data or optimization in ways that may affect utility,
safety behavior, or training stability.
Thus, auditing should not require changing the training dataset (e.g., inserting canaries) \citep{carlini2019secretsharer,parikh2022canaryNLU}
or running a \emph{different} training procedure merely to define or measure disclosure.
Moreover, the audit should measure disclosure for the \emph{actual model produced by the actual training run that will be deployed},
rather than a score that is only defined as an average across many alternative training runs \citep{zhang2023counterfactual}.

\vspace{5 pt}
\noindent
\emph{(C4) Auditing should take into account prior/common knowledge.}
For example, consider the sequence ``Napoleon Bonaparte lost the Battle of Waterloo on June 18, 1815.’’  Any LLM trained on a large corpus would be able to reproduce this sequence, regardless of whether it had seen the sequence during training. In a massive training corpus, much or even most data are unsurprising in the sense that their existence can be inferred, and declaring all such ``common knowledge'' data to be disclosed is unhelpful~\citep{zhang2023counterfactual}.

\vspace{5 pt}
\noindent
In this paper, we propose a new method for disclosure auditing that meets the requirements above, called \emph{gradient uniqueness} (GNQ).
GNQ is based on a principled, information-theoretic analysis of stochastic gradient-based training.
Our derivation of GNQ asks the following question:
given an empirical prior on the set of possible training sets—taken to be a training set formed by resampling the original dataset---what is the information present in a learned model regarding the presence of a particular training datapoint?
The GNQ of a particular datapoint is computed as an upper bound on this information, yielding a per-datapoint audit of disclosure for the specific trained model.

GNQ meets each of the aforementioned criteria.
Because it is derived from an information-theoretic analysis of the training process rather than from any particular post-hoc attack, it is fully attack-agnostic (C1).
GNQ is computed for each training datapoint "in-run" as the model is constructed. We propose an algorithm, called Batch-Space Ghost GNQ, that is able to compute GNQ for each datapoint in-run, during training with very low overhead (C2).
Moreover, GNQ does not require modifying the training algorithm, and it audits the disclosure of the actual model instance produced by the given training run (C3).
Finally, GNQ is defined to take into account common knowledge in the form of an empirical resampling distribution (C4).

Our specific contributions are as follows:

\begin{itemize}

\item We present a mathematically derived privacy score called \emph{gradient uniqueness}
or GNQ, that monotonically increases with an upper bound on the information disclosed to an attacker by mini-batch gradient descent. Thus, GNQ provides a principled, well-justified analysis of
the risk of the disclosure of individual datapoints in a learned model.

\item Computing the GNQ metric itself seems prohibitively expensive,  requiring the inversion of massive matrices of size $P \times P$, where $P$ is the number of model parameters (in practice, for the largest models, $P$ can be in the trillions). But surprisingly, it is very practical and can be computed with low overhead. We present an algorithm for computing GNQ in-run that makes use of a ``batch-space'' rather than ``parameter-space'' computation to dramatically reduce the cost of computing GNQ for each training point.

\item We show through extensive experimentation that GNQ is in fact a useful metric for the disclosure associated with each training point in a learned model, and that our efficient GNQ implementation adds only a very small overhead to mini-batch gradient descent.
    
\end{itemize}
\section{Related Work}
\label{sec:related_work}

In this section, we summarize the various approaches to auditing disclosure.  A summary of related work and how it addresses C1-C4 from the introduction is given in Table \ref{tab:prior_work_desiderata}.

\vspace{5 pt}
\noindent
\emph{Attack-based evaluations of disclosure.}
A large body of work measures disclosure by probing a \emph{single trained model instance} with a particular evaluation or attack family.
For language models, this includes prompting-based extraction and verification pipelines that recover memorized spans or documents
\citep{carlini2021extracting,nasr2023scalable,hayes2025probabilistic,yu2023bagoftricks,ozdayi2023controlling},
and formalizations connecting memorization to discoverability/extractability of training sequences
\citep{carlini2023quantifying}.
Other empirical probes focus on near-duplicate matching and scalable search procedures used to surface memorized or replicated content
\citep{peng2023nearduplicate,ippolito2023preventing,duan2024latent}.
A related line audits leakage of entities or personally identifiable information (PII), where disclosure can occur via associations even when surface form differs
\citep{zhou2023entity,kim2023propile,lukas2023pii}.
In parallel, privacy leakage is commonly operationalized through membership inference attacks (MIAs), which test whether a candidate datapoint was used for training,
including black-box MIAs \citep{shokri2017mia}, language-model-specific variants \citep{mattern2023neighbourhood,xie2024recall},
and strong statistical attacks such as LiRA \citep{carlini2022lira}.
These methods are valuable for demonstrating concrete failure modes, but they are inherently attack/evaluation-specific and thus are not attack agnostic.  None of the methods listed above take into account prior/common knowledge, and they are ``in-run'' as they rely on post-hoc model interaction,
large prompt/generation budgets, access to training substrings for verification, auxiliary models, or expensive corpus search. 

\vspace{5 pt}
\noindent
\emph{Canary and exposure-style tests.}
Another line of work audits disclosure by inserting synthetic secrets (``canaries'') into the training data and then measuring their leakage.
The exposure metric and related canary-based tests provide a clear operational signal when the audit target is an injected string~\citep{carlini2019secretsharer},
including NLU-focused canary extraction settings~\citep{parikh2022canaryNLU}.
\citet{steinke2023one_run_audit} propose privacy auditing with one training run, again centered on canaries, enabling an efficient single-run audit of leakage for selected secrets.
While these approaches can be carried out in-run, they are not attack-agnostic: the audit procedure is itself an attack protocol (canary insertion followed by attempted extraction or exposure computation).
Further, they require  modifying training by inserting audit targets, and they do not allow for auditing of all training points, as they are focused on inserted/selected canaries.


\vspace{5 pt}
\noindent
\emph{Counterfactual memorization and dataset commonality.}
A distinct line of work defines memorization \emph{counterfactually} as the causal effect of including a training example on model behavior, estimated by comparing
\emph{different training runs} on datasets that include versus exclude that example \citep{zhang2023counterfactual}.
This perspective is designed to disentangle memorization of \emph{rare} content from effects driven by \emph{dataset commonality}, in particular
near-duplicates or templated strings that occur across many documents: removing a single instance of highly repeated text should have only a small counterfactual effect.
Importantly, ``common'' in this setting refers to repetition within the \emph{training corpus} (duplication), which is conceptually distinct from ``common knowledge'' in the sense of well-known facts.
Related theoretical work motivates why memorization-like phenomena can be inherent or even unavoidable under certain distributional regimes
\citep{feldman2020doeslearning,brown2021memorizationirrelevant,feldmanzhang2020whatmemorize}.
However, counterfactual evaluation is expensive because it requires training many models on different subsets and aggregating across runs, so it cannot be done in-run, or without modifying the training process.
This is a major limitation in practice: empirical training-dynamics studies show that memorization can emerge early and vary sharply across examples
even when standard generalization diagnostics appear similar \citep{tirumala2022memorizationwithoutoverfitting}.


\begin{table}[t]
\centering
\setlength{\tabcolsep}{6pt}
\renewcommand{\arraystretch}{1.15}
\begin{tabular}{p{0.5\linewidth}cccc}
\toprule
& \multicolumn{4}{c}{\textbf{Desiderata}} \\
\textbf{Prior work} & \textbf{C1} & \textbf{C2} & \textbf{C3} & \textbf{C4} \\
\midrule

\textbf{Attack-based:} \citep{carlini2021extracting,nasr2023scalable,hayes2025probabilistic,yu2023bagoftricks,ozdayi2023controlling,
carlini2023quantifying,peng2023nearduplicate,ippolito2023preventing,duan2024latent,
zhou2023entity,kim2023propile,lukas2023pii,
shokri2017mia,mattern2023neighbourhood,xie2024recall,carlini2022lira}
& \xmark & \xmark & \cmark & \xmark \\

\textbf{Canary-based:} \citep{carlini2019secretsharer,parikh2022canaryNLU,steinke2023one_run_audit}
& \xmark & \xmark & \xmark & \xmark \\

\textbf{Counterfactual:} \citep{zhang2023counterfactual,feldman2020doeslearning,brown2021memorizationirrelevant,feldmanzhang2020whatmemorize}
& \cmark & \xmark & \xmark & \xmark \\

\bottomrule
\end{tabular}
\vspace{5 pt}
\caption{Disclosure auditing approaches versus desiderata defined in the Introduction (C1--C4).}

\label{tab:prior_work_desiderata}
\end{table}
\section{Gradient Uniqueness}
\label{sec:gnq}

In this Section, we present the central result of the paper: that \emph{gradient uniqueness} (or GNQ)---which we define subsequently---can be used to determine the level of disclosure associated with any datapoint during execution of a mini-batch gradient descent algorithm, regardless of the model.  In the Appendix of the paper we show mathematically that the information present in a learned model
regarding the presence of a particular training datapoint is upper-bounded by GNQ.   By computing the GNQ of each training point as a machine learning algorithm is run using an efficient algorithm (GhostGNQ), GNQ provides a principled, attack-agnostic method for auditing the level of disclosure associated with each datapoint.

Our upper-bound argument relies on analyzing information regarding the inclusion of a datapoint in the training set present in a set of model parameters, and so our analysis requires constructing a prior over $\mathcal{D}_t$---where $\mathcal{D}_t$ is the training data set---and viewing $\mathcal{D}_t$ as a random variable.  To facilitate this, let the set $\mathcal{D} = \{d_j\}_{j = 1}^N$ represent the universe of possible datapoints, and let $T_j \sim \textrm{Bernoulli} (N_t / N)$ determine whether the $j$th point in $\mathcal{D}$ is included in $\mathcal{D}_t$; thus, the expected training set size is $N_t$.  Note that the complete set of $T_j$ values for $j \in \{1...N\}$ fully determines $\mathcal{D}_t$.\footnote{In practice, we do not have access to the universal data set $\mathcal{D}$, and so when applying GNQ we will typically assume that various statistics regarding $\mathcal{D}$ can be approximated by analyzing the observed data set $\mathcal{D}_t$ (or, for computational efficiency, by analyzing an individual minibatch constructed during gradient descent---see our implementation algorithm in Section 5).}

Given this prior, our analysis assumes that $\mathcal{D}_t$ is used to train an arbitrary model using the classical, mini-batch gradient descent (Alg.~\ref{alg:new_sgd}).  The final model $\theta_{N_r}$ is then a random variable.  Note that there are two sources of randomness that contribute to the variability in $\theta_{N_r}$, and our analysis will take into account both: (1) the training set $\mathcal{D}_t$, and (2) the random batch sampling function $Batch()$.  We are now ready to define gradient uniqueness:
  
\begin{algorithm}[t]
\small
\SetAlgoLined
\DontPrintSemicolon
\KwIn{Training set $\mathcal{D}_t$, AI model with parameter vector $\theta \in \mathbb{R}^{P}$, point-wise loss function $\ell[\theta,d]$, learning rate \( \eta \), batch size $B$, number of training iterations $N_r$}
\KwOut{Optimized (final) model parameters \( \theta_{N_r} \)}

Initialize model parameters to $\theta_0$. 

Sample mini-batches $\{\mathcal{B}_i\}_{i=0}^{N_r-1}$ from $\mathcal{D}_t$ according to sampling distribution $\{\mathcal{B}_i\} \sim Batch (\mathcal{D}_t)$ with expected batch size $B$.

\For{$i=0$ {\bfseries to} $N_{r}-1$}{
    $\hat{g}_i = \frac{1}{B} \sum_{d_j \in \mathcal{B}_i} \nabla_{\theta}[\ell[\theta_i, d_j]]$ 

    $\theta_{i+1} = \theta_i - \eta \cdot \hat{g}_i$ 
    
}
\caption{Mini-Batch Stochastic Gradient Descent}
\label{alg:new_sgd}
\end{algorithm}

\begin{definition}[Gradient Uniqueness (GNQ)]
\label{def:gnq}
Consider training batch $i$. The \emph{gradient uniqueness} of datapoint $d_j$ with respect to batch $i$ is given by:
\begin{align}
    GNQ_{ij} = g_{ij}^{\top} S^{-1} g_{ij} \label{equ:Dij}  
\end{align}
where $S = \sum_{\substack{k=1 \\ k \neq j}}^{N} g_{ik}g_{ik}^{\top} +\lambda I \in \mathbb{R}^{P \times P}$ and 
$g_{ij} = \nabla_{\theta}[\ell[\theta_i, d_j]] \in \mathbb{R}^{P}$.  Here, $\lambda$ is some constant $> 0$ and $I$ is the identity matrix.
\end{definition}

\vspace{5 pt}
\noindent
Then, we can give our central result that bounds the information regarding $T_j$ that is present in $\theta_{N_r}$: 

\vspace{5 pt}
\textit{GNQ as an upper-bound on information disclosure (informal).} The amount of information (measured in bits) present in the learned model $\theta_{N_r}$ about $T_j$ (which governs whether $d_j \in \mathcal{D}$ is in $\mathcal{D}_t$) is upper-bounded by a function that increases monotonically with $\sum_{i=1}^{N_r-1} GNQ_{ij}$.

\vspace{5 pt}
\noindent 
A formal version of these results is given in Appendix~\ref{sec:theory}.

\section{What Does GNQ Measure?}

To show how GNQ measures information disclosure during gradient descent and to give some intuition behind the metric, we consider a simple example—a 2D linear regression model with the squared loss function (Fig.~\ref{fig:Dij_inner}~\subref{fig:data1}). The goal is to audit disclosure by quantitatively ranking the datapoints from the highest disclosure to the lowest disclosure.

\vspace{5 pt}
\noindent
\textit{GNQ-based Auditing.} The computation of GNQ can be geometrically represented as the construction of an ellipse summarizing the gradients; $GNQ_{ij}$ is the extent to which the gradient associated with $d_j$ is an outlier with respect to this ellipse, as shown in Fig.~\ref{fig:Dij_inner}~\subref{fig:grad1}. In this figure, we plot the gradients of each of the seven training points, and the associated ellipses. Note that in the definition of $GNQ_{ij}$, the matrix $S$ (and the resulting ellipse) is constructed using all the datapoints, except the point $d_j$ for which $GNQ_{ij}$ is computed. Thus there are two ellipses in Fig.~\ref{fig:Dij_inner}~\subref{fig:grad1}: the blue ellipse is for the case when point 7 is excluded to compute $GNQ_{i7}$, while the red ellipse is for the case when point 7 is included while excluding one of the other six points to compute $GNQ_{ik}$ where $k \in \{ 1,\cdots,6 \}$. Datapoint 7 has a very high GNQ value because it falls outside of the blue ellipse, while all other datapoints have a low GNQ value, as they fall inside of the red ellipse.  

Intuitively, as shown in Fig.~\ref{fig:Dij_accuracy}, the gradients associated with points 1-6 want to rotate the regression line counter-clockwise, centered roughly on datapoint 5---whereas datapoint 7 is doing exactly the opposite, hence the high value for $GNQ_{i7}$.

\begin{figure}[t]
  \centering
  \begin{subfigure}[t]{0.35\textwidth}
    \centering
    \includegraphics[width=\textwidth]{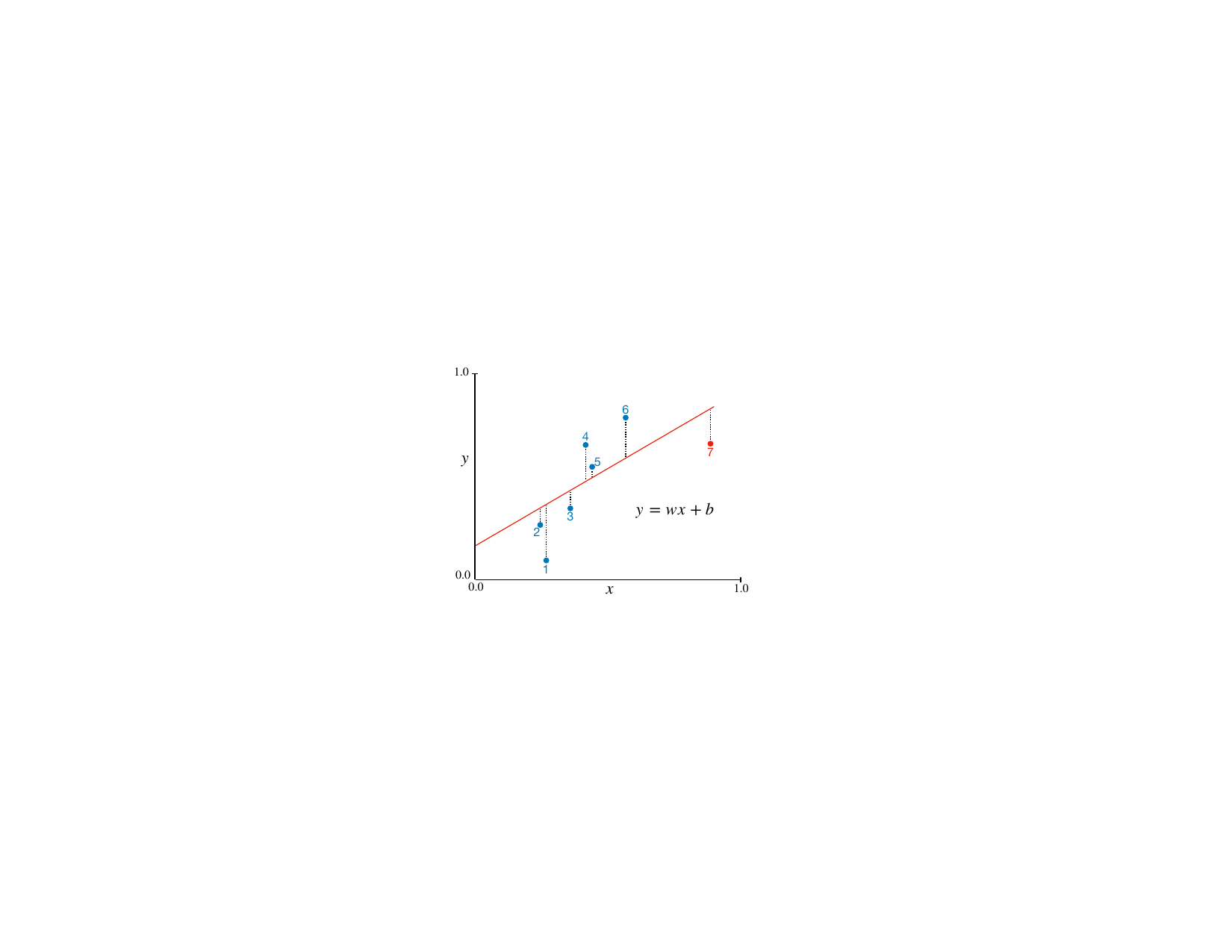}
    \caption{Data space.}
    \label{fig:data1}
  \end{subfigure}\hfill
  \begin{subfigure}[t]{0.35\textwidth}
    \centering
    \includegraphics[width=\textwidth]{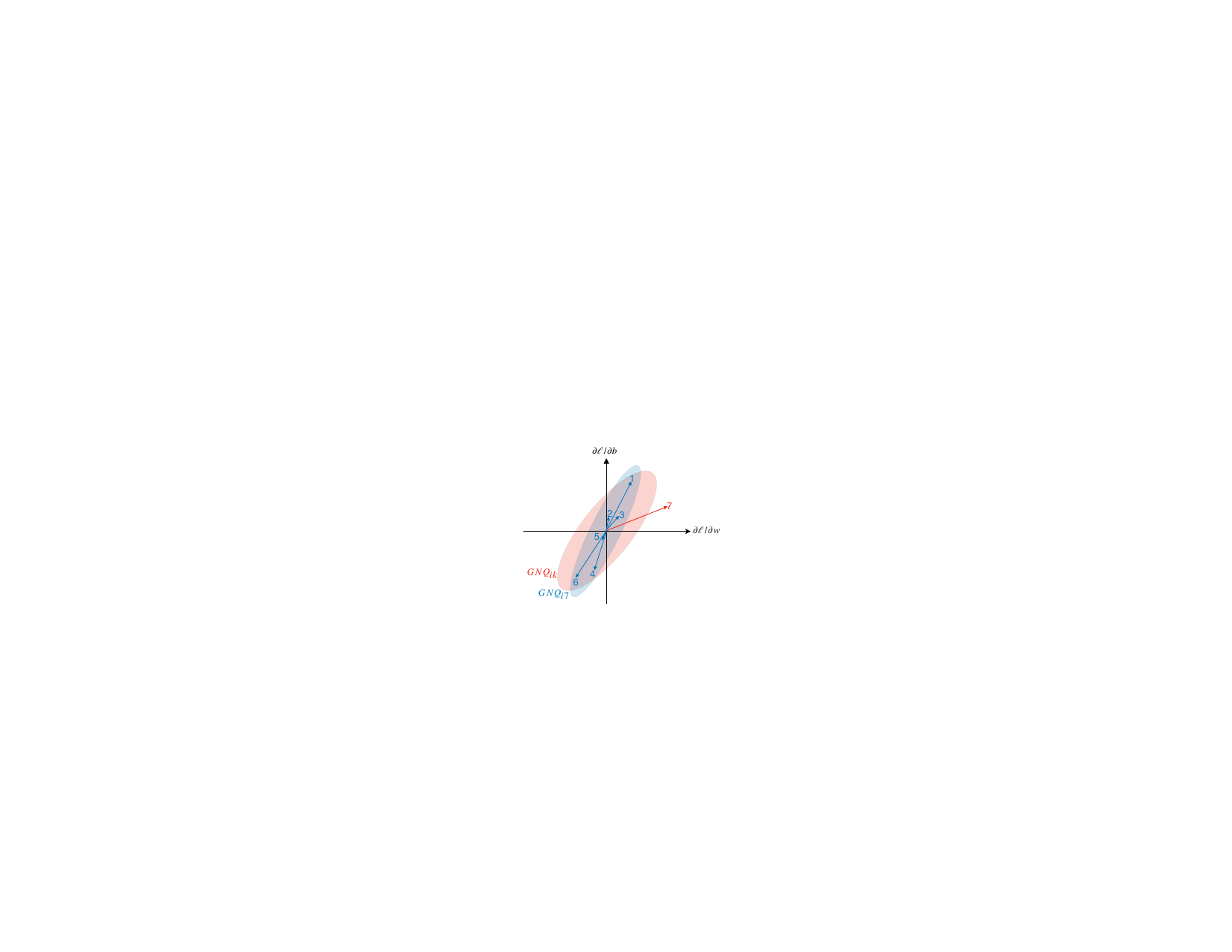}
    \caption{Gradient space.}
    \label{fig:grad1}
  \end{subfigure}

  \caption{GNQ-based disclosure auditing.}
  \label{fig:Dij_inner}
\end{figure}

\vspace{5 pt}
\noindent
\textit{GNQ-based auditing vs attack-based auditing.} 
There are other auditing-based methods, the most well-known of which is MIA, which is an attack mechanism (see Sec.~\ref{sec:related_work}). MIA attempts to infer whether a given datapoint $d_j$ was part of the training dataset. This goal is typically achieved by computing a membership score $\mathcal{M} [\theta_{N_r}, d_j]$ and comparing it against a decision threshold $\tau$. Often the membership score is (or a function of) the model’s loss on the target datapoint, while the decision threshold is a global threshold used for all datapoints determined either based on a heuristic  (members ($d_j \in \mathcal{D}_t$) tend to have lower loss, while non-members ($d_j \notin \mathcal{D}_t$) tend to incur higher loss) or obtained by training shadow models.

In general, heuristic-based methods such as MIA produce very different results from GNQ.  Consider Fig.~\ref{fig:Dij_vs_loss}. In gradient space, the  global loss threshold $\ell=\tau$  appears as a horizontal strip bounded by the two lines $\partial\ell/\partial b=\pm\sqrt{2\tau}$. Points inside the strip $\big(|\partial\ell/\partial b|\le\sqrt{2\tau}\big)$ are classified as members in the training set; points outside are non-members. We consider two possible thresholds $\tau_1$ and $\tau_2$. If we use $\tau_1$, points 2, 3, and 5 are inside the $\tau_1$-strip, so they will be ranked as high disclosure points, which does not match the GNQ ranking. Moreover, points 1, 4, 6 and 7 are outside the $\tau_1$-strip, so they will be classified safe points; this matches the GNQ ranking regarding points 1, 4, and 6, however it misranks point 7; the most crucial point in terms of information disclosure according to GNQ. Using $\tau_2$, points  2, 3, 5 and 7 are inside the $\tau_2$-strip, so they will be ranked as high disclosure points—which does not match the GNQ ranking except for point 7. Moreover, points 1, 4 and 6 are outside the $\tau_2$-strip, so they will be wrongly classified as low-disclosure points.

Further, any loss-based ranking depends solely on the residual (vertical distance to the fit), so points \(2,3,\) and \(5\)—having smaller residuals (lower loss) than point \(7\)—are necessarily ranked as higher disclosure for any choice of threshold. By contrast, GNQ, uses the full gradient geometry: it accounts for (1) the residual $r$, (2) the feature vector (e.g., $[x,1]$ in our 2D example)—both via the gradient formula $g=-r[x,1]$—and (3) inter-example correlations via \(S^{-1}\). 



\begin{figure}[t]
  \centering
  \begin{subfigure}[t]{0.35\textwidth}
    \centering
    \includegraphics[width=\textwidth]{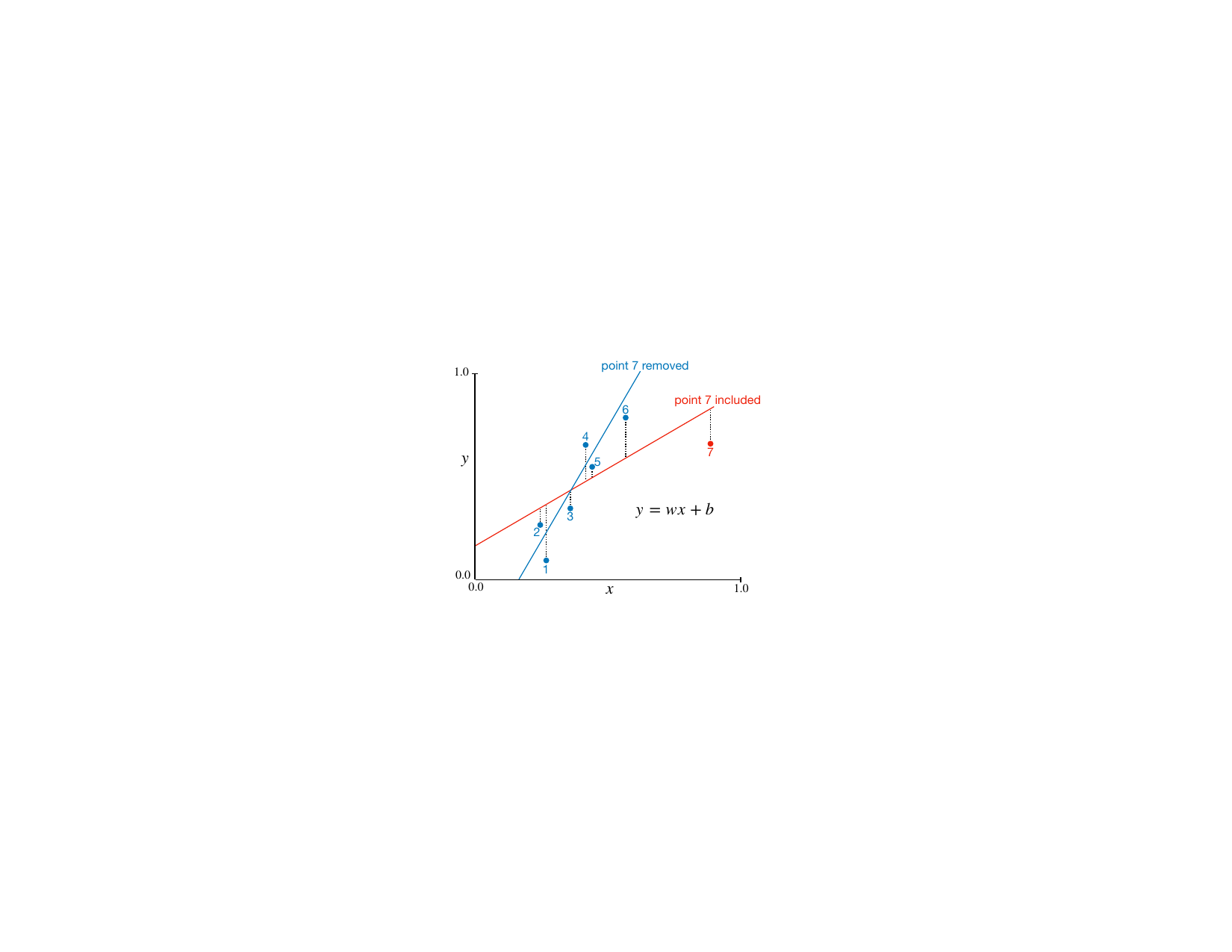}
    \caption{Accuracy of GNQ-based auditing.}
    \label{fig:Dij_accuracy}
  \end{subfigure}\hfill
  \begin{subfigure}[t]{0.35\textwidth}
    \centering
    \includegraphics[width=\textwidth]{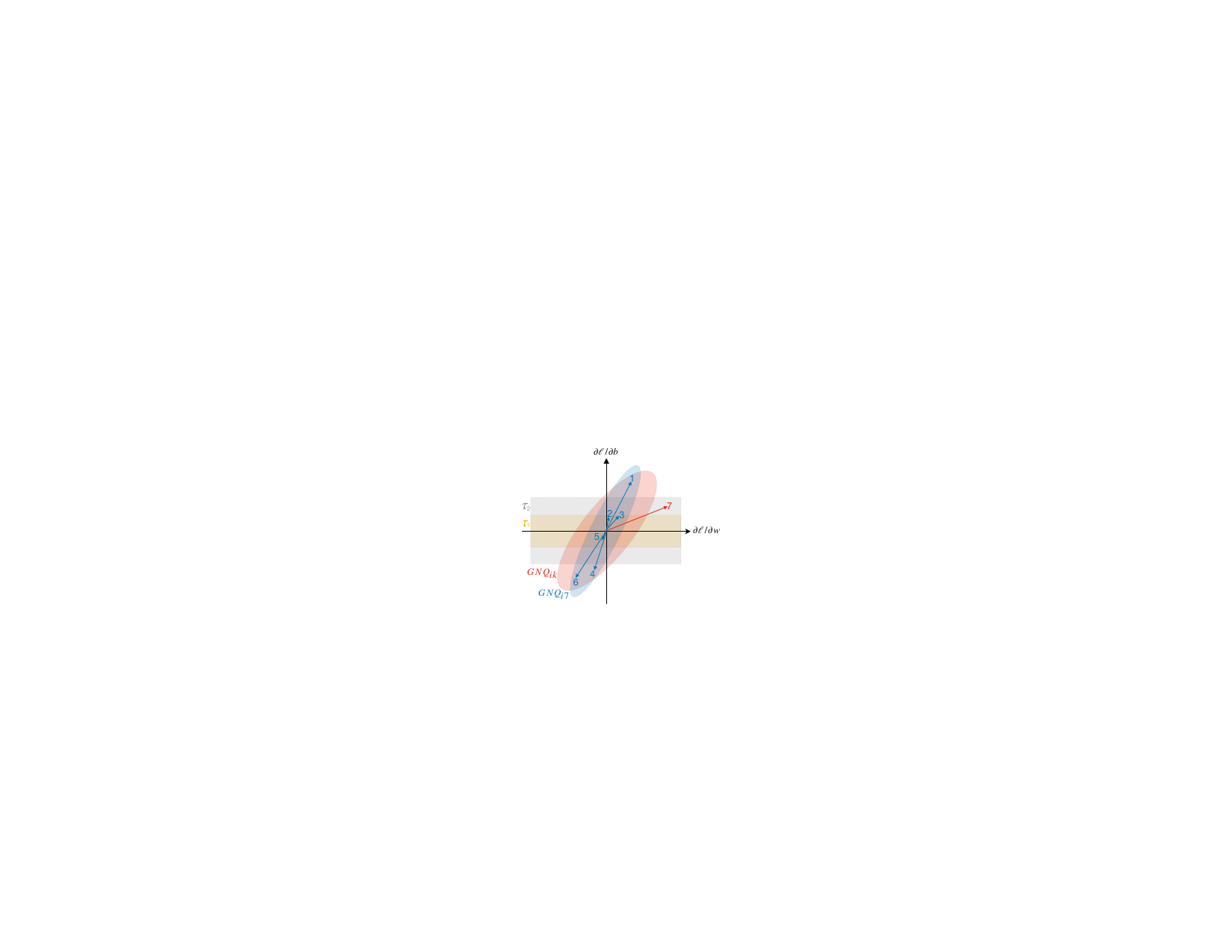}
    \caption{GNQ-based vs.\ attack-based auditing.}
    \label{fig:Dij_vs_loss}
  \end{subfigure}

  \caption{Additional illustrations on the running example.}
  \label{fig:auditing_comparisons}
\end{figure}

\section{LLM-Scale GNQ via BS-Ghost GNQ}
\label{sec:ghostgnq}

This section derives the \emph{Batch-Space GhostGNQ} or BS-Ghost GNQ algorithm, (Alg.~\ref{alg:general_ghost}), an efficient algorithm that makes it feasible to compute the GNQ of Definition in~\ref{def:gnq} a way that scales to large datasets and large models.

\subsection{NaiveGNQ and computational bottlenecks}
\label{subsec:naive_gnq}

The most direct implementation follows Definition~\ref{def:gnq} almost literally. Alg.~\ref{alg:naive} provides a reference implementation.  The algorithm is quite simple:
compute all per-example gradients $\{g_{ij}\}_{j=1}^N$, and for each $j$ at iteration $i$ construct the leave-one-out matrix  $S_{-j}^{(i)}$, invert it $\big(S_{-j}^{(i)}\big)^{-1}$, and evaluate
$GNQ_{ij}=g_{ij}^{\top} \big(S_{-j}^{(i)}\big)^{-1} g_{ij}$. 

One point that bears further discussion is the distinction between the ``universal'' data set $\mathcal{D}$ and the training data set $\mathcal{D}_t$.  Recall from Section 3 of the paper that $\mathcal{D}_t$ is viewed as a random variable, and GNQ measures the information regarding $\mathcal{D}_t$ (or, equivalently, the individual $T_j$ values that determine $\mathcal{D}_t$) that is disclosed via the learned model.  Note that a ``literal'' implementation of GNQ (and thus Alg.~\ref{alg:naive}) requires access to $\mathcal{D}$ (which is likely not available). Thus, in practice Alg.~\ref{alg:naive}) will be run on $\mathcal{D}_t$ instead, using $\mathcal{D}_t$ as a proxy for $\mathcal{D}$. 

\begin{algorithm}[t]
\caption{NaiveGNQ (reference implementation)}
\small
\label{alg:naive}
\KwIn{Model at iteration $i$ ($\theta_i$), full dataset $\{d_j\}_{j=1}^N$, $\lambda>0$}
\KwOut{$\mathrm{GNQ}_i \in \mathbb{R}_{\geq 0}^N$ : a vector contains all $GNQ_{ij}$ values}

\tcp{\textbf{Per-example gradients require $N$ backprops}}
\For{$j \gets 1$ \KwTo $N$}{
    Zero model gradients\;
    
    Backpropagate to compute and store $g_{ij} = \nabla_\theta \ell(\theta_i; d_j) \in \mathbb{R}^P$\;
    
}

\tcp{\textbf{GNQ requires $P\times P$ construction/inversion}}
$\mathrm{GNQ}_i \leftarrow 0_{N}$\;

\For{$j \gets 1$ \KwTo $N$}{
    Form $
        S_{-j}^{(i)} \gets \sum_{k \neq j} g_{ik} g_{ik}^{\top} + \lambda I_P \in \mathbb{R}^{P \times P}
    $\;
    
    Invert it $\big(S_{-j}^{(i)}\big)^{-1}$\;
    
    $GNQ_{ij} \gets g_{ij}^{\top} (S_{-j}^{(i)}\big)^{-1} g_{ij}$\;

    $\mathrm{GNQ}_i[j] \gets GNQ_{ij}$\;
}
\end{algorithm}

Unfortunately, NaiveGNQ is impractical for three reasons:
\begin{enumerate}
    \item \textbf{Per-example gradient extraction is too expensive.} This requires $N$ separate backward passes and storage of $N$ different per-training point gradient vectors each of size $P$.
    \item \textbf{Forming the leave-one-out empirical covariance matrices is impractical.} NaiveGNQ requires forming a set of $P\times P$ of matrices (the $S_{-j}^{(i)}$ matrices) which are massive.
    \item \textbf{The leave-one-out empirical covariance matrices must be inverted.}  Even if it were practical to form these matrices, they must be inverted, which requires an $O(P^3)$ computation.
\end{enumerate}


\subsection{BS-Ghost GNQ: Overview}

We now present a much more reasonable alternative that makes GNQ-based auditing practical.  

The first optimization used by BS-Ghost GNQ is to compute GNQ with respect to each batch, rather than with respect to the entire data set.  That is, to compute GNQ for a datapoint that has been included in a batch, we must perform the leave-one-out computation on the self-outer products for the gradients computed over the entire data set.  This would require computing a large number of gradients that are not part of the batch and so are not actually used during learning.  As batches are already very large during LLM training and they are sampled randomly so they are a reasonable proxy for the entire training set, limiting the GNQ computation to a given batch is an obvious optimization.

However, this only partially addresses problem (1) above by replacing $N$ (the data set size) with $B$ (the batch size); and it leaves the other two issues un-addressed.
BS-Ghost GNQ has two key additional innovations: first, we use a mathematical ``trick'' that enables us to perform \emph{all} computations in batch-space ($B\times B$) instead of parameter-space ($P\times P$). Specifically, using standard matrix identities (push-through) together with a leave-one-out conversion (Sherman-Morrison),
it is possible to rewrite the GNQ computation so that the only matrix we ever need to invert is this batch kernel (the Gram matrix of gradient inner products in batch-space \(\mathbb{R}^{B\times B}\)). This totally removes the cubic dependence upon $P$ resulting from the need to invert a $P\times P$ matrix.  Note that this reparameterization is exact (it does not approximate GNQ).
Of course, we solve a single linear system instead of explicitly forming an inverse. But, the key insight is that now this is done in batch-space instead of parameter-space, which is practical even when \(P\) is in the billions.

Second, we compute the required batch-space quantities without explicitly forming the per-example gradient vectors (i.e. no extra backward passes required to compute per-example gradients).  To do this, we use a ``ghost kernel'' method inspired by the ``ghost clipping'' and ``ghost dot-product'' ideas from the differential privacy and data valuation literature, respectively~\citep{lee2020scalingdifferentiallyprivatedeep, wang2025datashapleytrainingrun}.

\subsection{The BS-Ghost GNQ Algorithm}

We now describe the three key ideas that enable the efficient implementation in more detail.

\vspace{5 pt}
\noindent
\textbf{Step (1) : $B$ per-example gradients instead of $N$}

\noindent Forming $S_{-j}^{(i)}$ requires $N$ separate backward passes to compute per-example gradients. However, the mini-batch SGD algorithm itself uses the mini-batch gradient $\hat{g}_i = \frac{1}{B} \sum_{d_j \in \mathcal{B}_i} \nabla_{\theta}[\ell[\theta_i, d_j]]$ instead of the dataset gradient $g_i = \frac{1}{N_t}\sum_{d_j \in \mathcal{D}_t} \nabla_{\theta}[\ell[\theta_i, d_j]]$. Hence, we compute $GNQ_{ij}$ using $S_{-j}^{(i)} = \sum_{\substack{d_k \in \mathcal{B}_i \\ k \neq j}} g_{ik}g_{ik}^{\top} +\lambda I_P$ instead of $S_{-j}^{(i)} = \sum_{\substack{k=1 \\ k \neq j}}^{N} g_{ik}g_{ik}^{\top} +\lambda I_P$. This significantly reduces the number of per-example gradient vectors that need to be computed and materialized at each iteration.

\vspace{5 pt}
\noindent
\textbf{Step (2) : Batch-Space Formulation}

\noindent
Fix an SGD step $i$ with batch $\mathcal{B}_i$ (size $B$), parameters $\theta_i$, and per-example gradients
$\{g_{ij}\}_{j\in\mathcal{B}_i}$.
Let $G\in\mathbb{R}^{B\times P}$ be the matrix whose $j$-th row is $g_{ij}^{\top}$.
Define the regularized ``total'' matrix in parameter space and its batch-space Gram matrix:
\begin{align}
\label{eq:Stot_K}
S_{\mathrm{tot}}^{(i)} &\triangleq G^{\top} G+\lambda I_P \in\mathbb{R}^{P\times P}, \nonumber \\
K^{(i)} &\triangleq GG^{\top} \in\mathbb{R}^{B\times B}. \nonumber
\end{align}
We remind the reader of the Push-through identity which states that:
\label{lem:pushthrough}
For $\lambda>0$,
\begin{equation}
\label{eq:pushthrough}
(G^{\top} G+\lambda I_P)^{-1}G^{\top} \;=\; G^{\top} (GG^{\top}+\lambda I_B)^{-1}. \nonumber
\end{equation}
For each $j\in\mathcal{B}_i$, define
\begin{equation}
h_j^{(i)} \triangleq g_{ij}^{\top}\big(S_{\mathrm{tot}}^{(i)}\big)^{-1} g_{ij}. \nonumber
\end{equation}
Writing $g_{ij}=G^{\top} e_j$ (with $e_j$ the $j$th standard basis vector in $\mathbb{R}^B$) and applying the Push-through identity,
\begin{align}
h_j^{(i)}
&= e_j^{\top} G (G^{\top} G+\lambda I_P)^{-1} G^{\top} e_j \nonumber \\
&= e_j^{\top} GG^{\top} (GG^{\top}+\lambda I_B)^{-1} e_j \nonumber \\
&= \big[\,K^{(i)}\big(K^{(i)}+\lambda I_B\big)^{-1}\big]_{jj}. \nonumber
\label{eq:h_batch}
\end{align}
Thus $h^{(i)}$ is computable using only $B\times B$ matrices.

Recall
\(
S_{-j}^{(i)}=\sum_{k\in\mathcal{B}_i,\;k\neq j} g_{ik}g_{ik}^{\top}+\lambda I_P
\)
and note $S_{\mathrm{tot}}^{(i)} = S_{-j}^{(i)} + g_{ij}g_{ij}^{\top}$.
Sherman--Morrison yields:

\begin{lemma}[GNQ via Sherman--Morrison]
\label{lem:loo_gnq}
Whenever $0\le h_j^{(i)}<1$,
\begin{equation}
\mathrm{GNQ}_j^{(i)}
\;=\;
g_{ij}^{\top}\big(S_{-j}^{(i)}\big)^{-1} g_{ij}
\;=\;
\frac{h_j^{(i)}}{1-h_j^{(i)}}. \nonumber
\end{equation}
\end{lemma}
\begin{proof}
Apply Sherman--Morrison to $(S_{-j}^{(i)} + g_{ij}g_{ij}^{\top})^{-1}$:
\[
\big(S_{-j}^{(i)} + g_{ij}g_{ij}^{\top}\big)^{-1}
=
\big(S_{-j}^{(i)}\big)^{-1}
-\frac{\big(S_{-j}^{(i)}\big)^{-1} g_{ij} g_{ij}^{\top}\big(S_{-j}^{(i)}\big)^{-1}}
{1+ g_{ij}^{\top}\big(S_{-j}^{(i)}\big)^{-1} g_{ij}}.
\]
Let $x \triangleq g_{ij}^{\top}\big(S_{-j}^{(i)}\big)^{-1} g_{ij}$.
Multiplying on the left/right by $g_{ij}^{\top}$ and $g_{ij}$ gives
$
h_j^{(i)} = x - \frac{x^2}{1+x} = \frac{x}{1+x}.
$
Solving for $x$ yields $x = h_j^{(i)}/(1-h_j^{(i)})$.
\end{proof}

\vspace{5 pt}
\noindent
\textbf{Step (3) : Ghost Kernel Construction}

\noindent In the naive approach to GNQ, figuring out how much a model memorized a specific datapoint requires calculating and storing a distinct gradient vector for every training point. For modern LLMs containing billions or trillions of parameters, doing this explicitly is impossible. The use of so-called ``ghost kernels'' enables practical computation of GNQ. 
Instead of materializing the gradients themselves, each ghost kernel computes a Gram matrix describing how the gradients of each datapoint correlate with the gradient of all other datapoints within the same training batch. This is done solely by reusing computations that need to be done anyway: the forward activations and backward errors that are already computed during backpropagation. Using ghost kernels we obtain the exact Gram matrix gradients we need, for free, without additional memory overhead.

Note that to compute GNQ, we need a layer-specific ghost kernel for each instrumented layer (whether it is linear, convolutional, embedding, or another type). The final, comprehensive relationship matrix for the entire batch is simply the sum of these individual layer-wise kernels.

The mathematical trick of bypassing explicit gradient materialization was explored in ``ghost clipping'' used in differential privacy and the ``ghost dot-product'' used in data valuation. However, those earlier methods primarily utilized the trick to compute simple, one-dimensional metrics. In contrast, our approach leverages the ghost trick to construct a complete, multi-dimensional matrix in batch-space to facilitate the required the ``leave-one-out'' inversion required by GNQ.


We now describe the required ghost kernels for three specific layer types.  Other ghost kernels can be implemented similarly.

\paragraph{Linear layers.}
Consider a linear layer $\ell$ with cached activations
$X^{(\ell)}\in\mathbb{R}^{B\times n_{\mathrm{in}}^{(\ell)}}$ and output backprop errors
$\delta^{(\ell)}\in\mathbb{R}^{B\times n_{\mathrm{out}}^{(\ell)}}$ from the training backward pass at step $t$.
For sample $j$, the per-example weight gradient is
$g_{j}^{(\ell)} = \delta^{(\ell)}_j {X^{(\ell)}_j}^{\top}$.
Therefore, the layerwise contribution to the batch kernel is
\begin{equation}
\label{eq:K_linear}
K^{(\ell)} \;=\; \big(\delta^{(\ell)}{\delta^{(\ell)}}^{\top}\big)\odot\big(X^{(\ell)}{X^{(\ell)}}^{\top}\big).
\end{equation}
With bias, the standard augmentation yields
\begin{equation}
\label{eq:K_linear_bias}
K^{(\ell)} \;=\; \big(\delta^{(\ell)}{\delta^{(\ell)}}^{\top}\big)\odot\big(X^{(\ell)}{X^{(\ell)}}^{\top}+\mathbf{1}\mathbf{1}^{\top}\big).
\end{equation}
Summing across instrumented layers yields $K^{(t)}=\sum_{\ell=1}^{L}K^{(\ell)}$.


\paragraph{Convolutional layers.}
This is not limited to linear layers; similar analysis can be done for other types of layers. For example, consider a convolution layer $\ell$, the same identities hold after unfolding/im2col:
$g_{j}^{(\ell)}=\delta^{(\ell)}_j {\left(\widetilde X^{(\ell)}_j\right)}^{\top}$, hence
\[
K^{(\ell)}=(\delta\delta^{\top})\odot(\widetilde X\widetilde X^{\top}),
\qquad
r^{(\ell)}(q)=(\delta\delta_q^{\top})\odot(\widetilde X\widetilde X_q^{\top}),
\]
with the analogous bias adjustment.


\paragraph{Embedding layers.} In LLMs, an embedding layers $\ell$ contains a massive number of parameters (vocabulary size $V$ by embedding dimension $d$). However, the per-example gradients are highly sparse. Let $t \in \mathbb{Z}^B$ be the batch of input token indices for a given step, and $\delta^{(\ell)} \in \mathbb{R}^{B \times d}$ be the output backpropagation errors. The per-example gradient $g_j^{(\ell)}$ is non-zero only at the row corresponding to its specific token index $t_j$. Therefore, the inner product $\langle g_j^{(\ell)}, g_k^{(\ell)} \rangle$ evaluates simply to the dot product of their error vectors ${\delta_j^{(\ell)}}^{\top} \delta_k^{(\ell)}$ if both examples activate the same token ($t_j = t_k$), and $0$ otherwise. By defining a binary token-matching indicator matrix $M \in \{0, 1\}^{B \times B}$ where $M_{j,k} = \mathbb{I}\{t_j = t_k\}$, the layerwise contribution to the batch kernel can be computed as:
\begin{equation}
K^{(\ell)} = (\delta^{(\ell} {\delta^{(\ell)}}^{\top}) \odot M \nonumber
\end{equation}
Note that for sequences of length $S$, this formulation easily extends by treating the effective batch size as $B \times S$ or by summing the dot products across the sequence dimension.

\begin{algorithm}[t]
\small
\caption{BS-Ghost GNQ}
\label{alg:general_ghost}
\KwIn{Model at iteration $i$ ($\theta_i$), training batch $\mathcal{B}_i$ (size $B$), $\lambda>0$}
\KwOut{$\mathrm{GNQ}_i\in\mathbb{R}_{\ge 0}^B$ : a vector contains all $GNQ_{ij}$ values}

\BlankLine
\tcp{\textbf{0. Hook setup (one-time)}}
Register forward hooks to cache layer inputs $X^{(\ell)}$ and backward hooks to cache output errors $\delta^{(\ell)}$\;


\BlankLine
\tcp{\textbf{1. No extra backprops}}
Clear training caches\;

Run the standard forward/backward on $\mathcal{B}_i$ (as required by SGD)\;

Hooks populate $\{X^{(\ell)},\delta^{(\ell)}\}_{\ell=1}^{L}$\;

  
  

\BlankLine
\tcp{\textbf{2. Build batch kernel $K^{(i)} \in \mathbb{R}^{B \times B}$}}
$K^{(i)} \leftarrow 0_{B\times B}$\;

\For{each instrumented layer $\ell$}{
  Compute $K^{(\ell)}$ (e.g., via \eqref{eq:K_linear_bias})\;
  
  $K^{(i)} \leftarrow K^{(i)} + K^{(\ell)}$\;
  
    
}

\BlankLine
\tcp{\textbf{3. Solve in batch space}}

Solve $M$ from $\big(K^{(i)}+\lambda I_B\big) M = K^{(i)}$\;

$h^{(i)} \gets \mathrm{diag}(M)$\;

$\mathrm{GNQ}_i \leftarrow h^{(i)}\oslash(1-h^{(i)})$\tcp*{Lemma~\ref{lem:loo_gnq}}


    
\end{algorithm}

\subsection{Time and Memory Complexity}
\label{subsec:ghost_complexity}

In this subsection we analyze the efficiency of these algorithms.  

\vspace{5 pt}
\noindent
\textbf{Complexity: NaiveGNQ (reference implementation).}

\vspace{5 pt}
\noindent
\textit{Time.}
Let $T_{\text{bwd}}$
the time cost of a standard backward pass. NaiveGNQ computes per-sample gradients $g_{ij}\in \mathbb{R}^P$ via $N$ separate backward passes:
\[
T_{\text{extract}}^{\text{Naive}} = O(N\,T_{\text{bwd}}).
\]
To compute $GNQ_{ij} = g_{ij}^{\top} \big(S_{-j}^{(i)}\big)^{-1} g_{ij}$ with
$S_{-j}^{(i)} = \sum_{\substack{k=1 \\ k \neq j}}^{N} g_{ik}g_{ik}^{\top} +\lambda I_P$,
it (i) forms each $S_{-j}^{(i)}$ by summing $N-1$ outer products (cost $O(P^2)$ each), totaling $O(N^2P^2)$,
and (ii) inverts $N$ distinct $P\times P$ matrices, costing $O(NP^3)$.
\[
T_{\text{GNQ}}^{\text{Naive}} = O(N^2P^2 + NP^3).
\]
Hence the total time is
\[T_{\text{Naive}} = O(NT_{\text{bwd}} + N^2P^2 + NP^3).\]

\vspace{5 pt}
\noindent
\textit{Memory.}
NaiveGNQ stores $N$ per-example gradient vectors (cost $O(NP)$) and (at least) one dense $S\in\mathbb{R}^{P\times P}$ (cost $O(P^2)$), so
\[
M_{\text{Naive}} = O(NP + P^2).
\]

\vspace{5 pt}
\noindent
\textbf{Complexity: BS-Ghost GNQ}

\vspace{5 pt}
\noindent
BS-Ghost GNQ never forms $G$ and never constructs/inverts $P\times P$ matrices.
Each instrumented layer contributes
$K_\ell=(X_\ell X_\ell^{\top})\odot(\delta_\ell\delta_\ell^{\top})$,
where $X_\ell\in\mathbb{R}^{B\times n_{\text{in}}^{(\ell)}}$ and $\delta_\ell\in\mathbb{R}^{B\times n_{\text{out}}^{(\ell)}}$
are produced by the (already-required) training backward pass.
Thus GNQ adds no additional per-example backward passes beyond training.

\vspace{5 pt}
\noindent \textit{Time.} For layer $\ell$, we use $n_{\mathrm{in}}^{(\ell)}$ and $n_{\mathrm{out}}^{(\ell)}$ to denote input/output widths.
Per layer, forming $X_\ell X_\ell^{\top}$ costs $O(B^2 n_{\text{in}}^{(\ell)})$ and
$\delta_\ell\delta_\ell^{\top}$ costs $O(B^2 n_{\text{out}}^{(\ell)})$; the Hadamard product is $O(B^2)$.
Summing across layers,
\[
T_{\text{kernel}}^{\text{Ghost}}
= O\!\left(B^2\sum_{\ell}\!\big(n_{\text{in}}^{(\ell)}+n_{\text{out}}^{(\ell)}\big)\right),
\qquad
T_{\text{solve}}^{\text{Ghost}} = O(B^3),
\]
so the additional GNQ computation is
\[
T_{\text{Ghost}}
= O\!\left(B^2\sum_{\ell}\big(n_{\text{in}}^{(\ell)}+n_{\text{out}}^{(\ell)}\big)+B^3\right).
\]
\vspace{5 pt}
\noindent \textit{Memory.}
GhostGNQ stores $K\in\mathbb{R}^{B\times B}$ (cost $O(B^2)$) and the cached layer tensors
(cost $O(B\sum_\ell(n_{\text{in}}^{(\ell)}+n_{\text{out}}^{(\ell)}))$), hence
\[
M_{\text{Ghost}}
= O\!\left(B^2 + B\sum_{\ell}\big(n_{\text{in}}^{(\ell)}+n_{\text{out}}^{(\ell)}\big)\right).
\]
\noindent
Crucially, there is no dependence on $N$ and $P$ in either time or memory.
\section{Experiments}
\label{sec:experiments}

In this section, we empirically validate both the computational advantages of our proposed algorithm and the practical utility of GNQ as a disclosure auditing metric. Our evaluation is driven by four primary goals: (1) Evaluating our BS-Ghost GNQ algorithm's efficiency and correctness; (2) Evaluating whether GNQ appropriately accounts for prior expectations by assigning lower uniqueness scores to unsurprising, common-knowledge data and higher scores to anomalous or false assertions; (3) evaluating whether GNQ serves as an attack-agnostic predictor of a sequence's vulnerability to targeted data extraction attacks; and (4) analyzing how GNQ trajectories evolve across multiple epochs of LLM training, illustrating how memorization and disclosure risk concentrate heterogeneously on specific examples as training progresses.

\subsection{Efficiency and Correctness of BS-Ghost GNQ}
\label{subsec:exp_runtime_correctness}

\begin{table}[t]
\centering
\begin{tabular}{l|c}
\toprule
\multicolumn{2}{c}{\textbf{Efficiency: GPT-2 on WikiText-2}} \\
\midrule
Time/iteration GPT-2 as-is & 0.53 sec  \\
Throughput GPT-2 as-is & 3864 tokens/sec \\
\midrule
Time/iteration BS-Ghost GNQ & 0.59 sec  \\
Throughput BS-Ghost GNQ & 3471 tokens/sec \\
\bottomrule
\end{tabular}
\caption{Cost of running BS-Ghost GNQ on the GPT-2 model.}
\label{tab:gpt2-ghost-overhead}
\end{table}

\vspace{5 pt}
\noindent
\textbf{Evaluating efficiency.}

\vspace{5 pt}
\noindent
\textit{Experimental setup.}
We run two experiments.  First, we benchmark BS-Ghost GNQ using the \texttt{gpt2-small} model.  The number of parameters in this model is $P \approx 1.24\times 10^8$). The model is trained on WikiText-2~\citep{merity2016pointer} with batch size $B=16$
and sequence length $128$. As in the remainder of this section, all experiments are run on a single NVIDIA Tesla P100-SXM2 GPU with 16GB memory---compared to more recent, the amount of memory and compute available on this GPU is very limited, making this an appropriate hardware platform to test algorithm efficiency.  We train GPT-2 both with BS-Ghost GNQ and without, collecting the per-iteration time and throughput, to examine the BS-Ghost GNQ overhead.

In the second experiment, we wish to compare the efficiency of BS-Ghost GNQ to the naive algorithm for computing GNQ.  Because the naive algorithm is infeasible for use in larger models, we run both the BS-Ghost GNQ and the native algorithm using a smaller model where NaiveGNQ is feasible:
a 3-layer multi-layer perceptron (MLP) with architecture $100$--$40$--$10$ (the number of parameters $P=4{,}450$). We used the MNIST~\citep{lecun1998mnist} dataset with batch size $B = 32$. In this experiment, we compute the running time and the maximum memory usage.

\vspace{5 pt}
\noindent
\textit{Experimental results.}
Results for GPT-2 are given in Table~\ref{tab:gpt2-ghost-overhead}. Using BS-Ghost GNQ on top of GPT-2 results in a modest $1.12\times$ overhead compared to training without BS-Ghost GNQ.  This verifies the hypothesis that GNQ can be used in-run with very little overhead.   Results for the MLP are given in Table~\ref{tab:mlp-naive-vs-ghost}.  As expected, the naive algorithm spends most of its time in explicit GNQ computation (forming and inverting $P\times P$ matrices, which have approximately 20 million entries),
while the BS-Ghost GNQ algorithm performs a relatively costless set of linear algebra operations on a $B\times B$ matrix after using the (essentially costless) ghost kernels to extract the necessary statistics.

\begin{table}[t]
\centering
\begin{tabular}{l|c}
\toprule
\multicolumn{2}{c}{\textbf{Efficiency: A Small Multi-Layer Perceptron}} \\
\midrule
Naive time to extract statistics & 0.026 sec  \\
Naive time to compute GNQ & 5.44 sec \\
Total time naive & 5.47 sec  \\
Max memory naive & 914 MB \\
\midrule
BS-Ghost GNQ time to extract statistics & 0.00 sec  \\
BS-Ghost GNQ time to compute GNQ & 0.04 sec \\
Total time BS-Ghost GNQ & 0.04 sec  \\
Max memory BS-Ghost GNQ & 0.1 MB \\
\bottomrule
\end{tabular}
\caption{Cost of running BS-Ghost GNQ on a small multi-layer perceptron.}
\label{tab:mlp-naive-vs-ghost}
\end{table}

\vspace{5 pt}
\noindent
\textbf{Evaluating correctness.}

\vspace{5 pt}
\noindent
\textit{Experimental setup.}
In theory, BS-Ghost GNQ should be mathematically equivalent to Definition~\ref{def:gnq} (Sec.~\ref{sec:ghostgnq}),
up to numerical floating-point error.  However, it is reasonable to ask: in practice, will the results be the same as the naive, straightforward implementation?  
For the small MLP, we also compare the results for the naive and BS-Ghost GNQ implementations across the two implementations (again, running the naive algorithm using GPT-2 is not feasible).

\vspace{5 pt}
\noindent
\textit{Experimental results.}
Across the batch, we find that the maximum absolute deviation is
\[
\max_j \left|\mathrm{GNQ}^{\text{Ghost}}_j - \mathrm{GNQ}^{\text{Naive}}_j\right|
\approx 2.0\times 10^{-10},
\]
confirming numerical equivalence.

\subsection{GNQ and Common Knowledge}
\label{sec:gpt2-common_knowledge}

GNQ is not directly related to any particular attack.  In fact, GNQ by design has little to do with attack success; as mentioned in the introduction, data that are completely unsurprising will (by design) have low GNQ as they have little effect on the model during training.  It is impossible to know (from examining the model) whether such data were used in training or not.  However, the same ``common knowledge'' datapoint may be recoverable verbatim by a prompt completion attack (for example), even though it has low GNQ.

To help convey what GNQ is actually evaluating, in this subsection we run a simple experiment to see how GNQ  handles common knowledge, as opposed to data that are surprising and hence would expectedly have a significant effect on the model.

\vspace{5 pt}
\noindent
\textit{Experimental setup.}
We first construct, with the help of a commercial, modern, LLM, a list of 200 sentences that represent common knowledge.  Examples are:
\begin{itemize}
\item William Shakespeare was a famous English playwright and poet.
\item Water freezes at 0 degrees Celsius (or 32 degrees Fahrenheit) under standard atmospheric pressure.
\item Alexander Graham Bell is credited with inventing the first practical telephone.
\end{itemize}
We then construct a similar set of sentences that are surprising, and would contradict common knowledge.  Examples are:
\begin{itemize}
\item The North American polka-dotted squirrel sustains itself entirely by photosynthesizing moonlight.
\item Mount Everest is actually a dormant alien spacecraft that crashed into Earth during the late Jurassic period.
\item Authentic Italian spaghetti grows on kelp-like vines in the deep, underwater caves of the Mediterranean Sea.
\end{itemize}
We then take fully-trained GPT-2 model, and fine-tune it with these 400 sentences, collecting the GNQ of each of the sentences during training.

As a comparison point, we also perform the same experiment, by testing counterfactual memorization~\cite{zhang2023counterfactual}. This common auditing method defines memorization by measuring the specific effect that including a single datapoint has on the model's behavior. It estimates this effect by comparing the outcomes of different training runs on datasets that either include or exclude that specific datapoint. Note that this is far more expensive than GNQ: because this approach requires conducting multiple separate training runs and then aggregating the results across those different model instances, it cannot be done during a single training process. However, as a standard auditing method, it is a worthwhile comparison.

\vspace{5 pt}
\noindent
\textit{Results.}
Figure~\ref{fig:gnq_common_knowledge} shows the GNQ values observed during training for these two sets of sentences. As expected, the group of sentences that are surprising and would contradict common knowledge tend to have larger GNQ.  This seems to confirm that GNQ is in fact highly correlated to the extent to which a new datapoint is ``surprising'' to the model, and contradicts prior expectations---hence its existence must be encoded in the model's weights. As a point of comparison, Figure~\ref{fig:counterfactual} shows that using the method of counterfactual memorization, there is much less separation between common knowledge and the surprising assertions.

\begin{figure}[t]
  \centering
    \includegraphics[width=\linewidth]{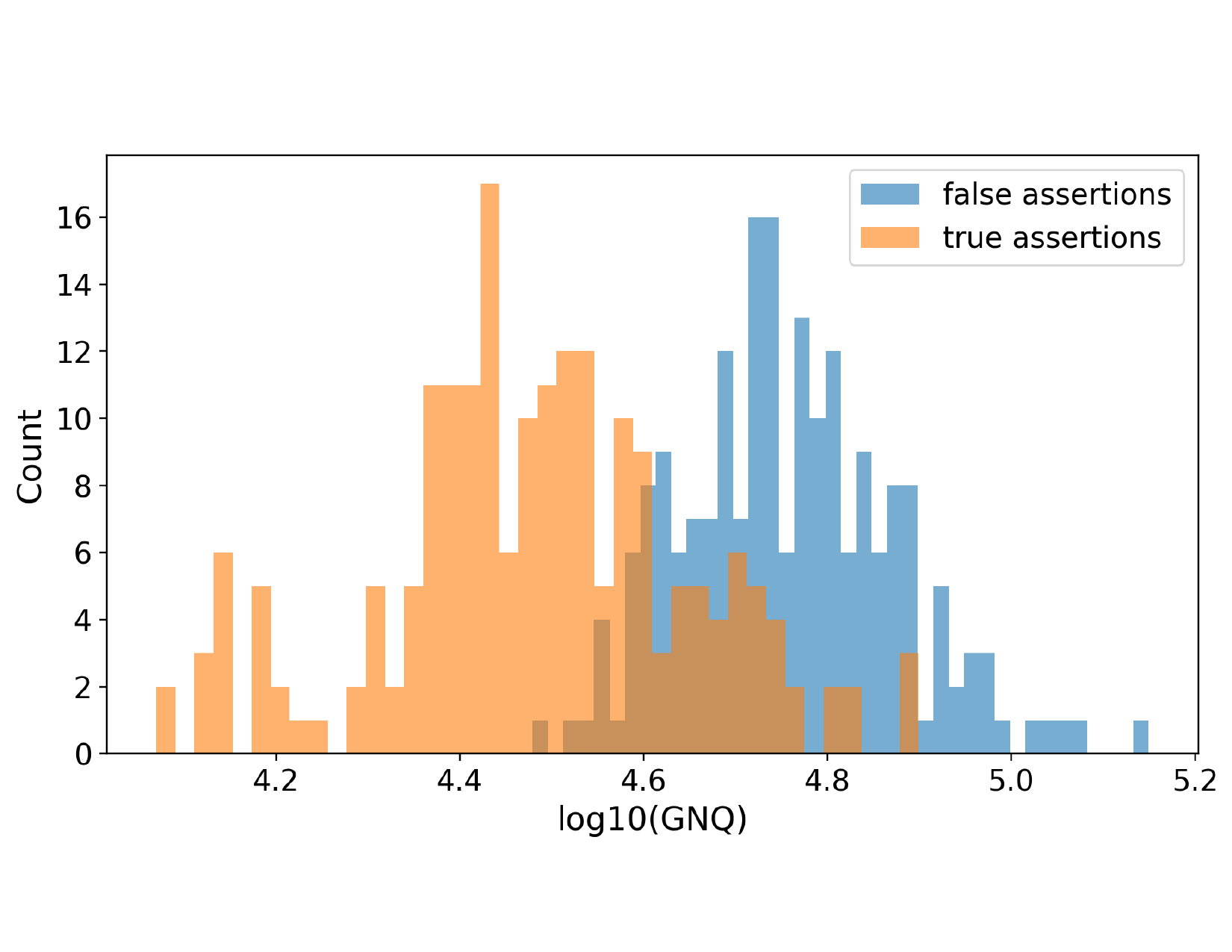}
  \caption{Comparing GNQ values for sentences that are commonly-known facts (orange) and sentences that make surprising (and false) assertions (blue).}
  \label{fig:gnq_common_knowledge}
\end{figure}

\begin{figure}[t]
  \centering
    \includegraphics[width=\linewidth]{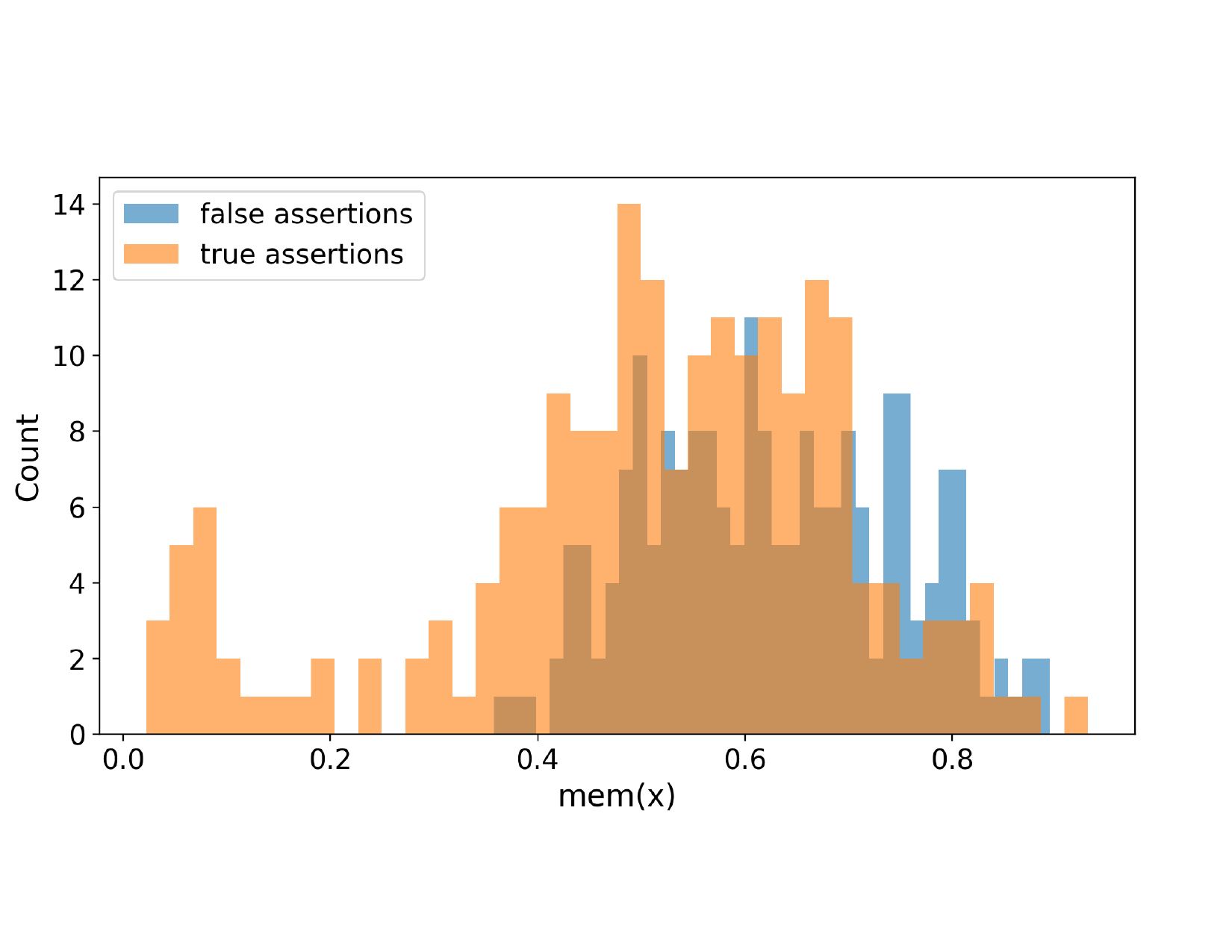}
  \caption{Comparing the effect of commonly-known facts (orange) and sentences that make surprising (and false)  assertions (blue) using counterfactual memorization.}
  \label{fig:counterfactual}
\end{figure}

\subsection{Text Memorization and Extractability}
\label{sec:gpt2-ghostgnq-extraction}

As mentioned in the previous subsection, GNQ is not designed to audit the effectiveness of a given attack on any particular datapoint; instead, it measures the information embedded about the datapoint in the model.  However, one may still expect that GNQ typically
identifies sequences that are more vulnerable to targeted extraction.  In this subsection, we examine that relationship.

\vspace{5 pt}
\noindent
\textit{Experimental setup.}
Using the same 400 datapoints from the prior section, we perform a targeted exact-recovery test based on prefix completion \citep{carlini2022quantifyingmemorization, hayes2025probabilistic}.
For each candidate sequence, we reveal the first $p=12$ tokens and ask the model
to generate the remainder.
We say that the attack has succeeded with the generated text exactly matches the original.

To select which of the sequences to attempt to recover, we use three different strategies:
(i) selecting the top-$k$ texts ranked by maximum GNQ, (ii) selecting the top-$k$ texts ranked by counterfactual memorization
and (iii) selecting $k$ texts uniformly at random.  

\vspace{5 pt}
\noindent
\textit{Results.}
Results are given in Table~\ref{tab:ghostgnq-targeted-extraction}.  There are a few interesting findings.  First, GNQ \emph{does} in fact correlate very strongly with extractability (and hence vulnerability to attack).  The top 20 sentences according to GNQ are in fact \emph{all} recovered by the prefix attack (35 of the top 40 are recovered). This shows that GNQ is almost perfect in predicting which of the sentences will be extracted. Compare with only 7 of the top 20 (19 of the top 40) whose extraction is predicted correctly according to the counterfactual memorization metric.  Also note that when choosing randomly, between 25\% and 45\% of the sentences can be reconstructed, with the surprising/untrue sentences being around four times more vulnerable to attack.

\begin{table}[t]
\centering
\small
\begin{tabular}{lrrrr}
\toprule
\textbf{Target set} &
\textbf{Budget} &
\textbf{$k$} &
\textbf{Extracted} &
\textbf{Group breakdown} \\
\midrule
GNQ & top-$5\%$  & 20 & 20 &
false=20,\; true=0 \\
GNQ & top-$10\%$ & 40 & 35 &
false=32,\; true=3 \\
GNQ & top-$20\%$ & 80 & 64 &
false=58,\; true=6 \\
\midrule
Counterfactual & top-$5\%$  & 20 & 7 &
false=7,\; true=0 \\
Counterfactual & top-$10\%$ & 40 & 19 &
false=19,\; true=0 \\
Counterfactual & top-$20\%$ & 80 & 41 &
false=39,\; true=2 \\
\midrule
Random (trial 1) & $5\%$ & 20 & 5 &
false=4,\; true=1 \\
Random (trial 2) & $5\%$ & 20 & 9 &
false=7,\; true=2 \\
\bottomrule
\end{tabular}
\caption{Targeted prefix-completion exact-recovery for sentences that assert well-known facts (``true'') and those that make surprising and untrue (``false'') assertions ($n=400$).}
\label{tab:ghostgnq-targeted-extraction}
\end{table}

\subsection{GNQ during LLM Training}
\label{sec:gnq_during_training}
We evaluate how GNQ evolves throughout training, and whether GNQ accumulated over training predicts which training examples are extractable by a standard post-training extraction attack.

\vspace{5 pt}
\noindent
\textit{Experimental setup.}
We use a publicly available GPT-2 model with many intermediate checkpoints~\citep{gonzalez_gpt2_ag_news_hf}.  This GPT-2 model we trained for 100 epochs using a large set of news articles. We then randomly select set of training examples ($n = 500$).
For each checkpoint (epochs $\{1,10,20,\dots,100\}$), we compute per-example GNQ using a fixed mini-batch partition (fixed order) so GNQ is comparable across checkpoints.
For each example $x$, we aggregate disclosure over training by
$\mathrm{GNQ}_{\mathrm{total}}(x)=\sum_{t}\mathrm{GNQ}^{(t)}(x)$, summing GNQ across the evaluated checkpoints.
We then run a prefix-completion extraction attack on the final checkpoint (epoch 100): given a prefix of $p$ tokens, we greedily generate the remainder and measure exact token match on the continuation, declaring success when the match fraction exceeds a threshold (here $p=45$ and match $\ge 0.98$).

\vspace{5 pt}
\noindent
\textit{Results.}
To give the reader an idea of how GNQ varies over time, Figure~\ref{fig:gnq_ckpt_trajectories} shows GNQ trajectories across checkpoints for 20 randomly chosen training examples.
We observe substantial heterogeneity: some trajectories show a low GNQ throughout training, while others grow sharply and remain high, suggesting that disclosure concentrates on a subset of examples as training progresses. Two of the 20 examples exhibit strong monotone growth, reaching very large GNQ by the final checkpoint, while other examples remain consistently low throughout training.  In Table~\ref{tab:gnq_traj_texts}, we give the articles corresponding to the two highest-GNQ trajectories, and the articles corresponding the two lowest-GNQ trajectories. 

Finally, we stratify all of the 200 examples into quantile bins by $\mathrm{GNQ}_{\mathrm{total}}$ rank and report extraction success rate per bin. Attack's success means the model's continuation matches the reference text with match $\ge 0.98$.    Results are shown in Figure~\ref{fig:gnq_training_attack_bins}.  This shows that in general, attack success is strongly correlated with GNQ, though the correlation is not perfect: the highest GNQ quantile according to GNQ has an attack success rate of 70\%, as opposed to nearly 90\% according to the next-highest GNQ quantile.  This illustrates a key point: GNQ is a measure of the amount of information regarding a datapoint that is embedded in the model and not of attack success; a weak attack that cannot make use of the available information may not be able to extract a point with high GNQ.

\begin{table*}[t]
\centering
\small
\begin{tabular}{c c p{0.8\linewidth}}
\toprule
\textbf{Trajectory} & \textbf{Avg.\ GNQ} & \textbf{Training text} \\
\midrule
High \#1 & 12972.5 &
Sleek Looks and Superb Performance to Woo Mac Fans and PC Buyers Most all-in-one desktops with LCD panels (the Sony VAIO and Gateway Profile systems come to mind) are two units permanently connected together: the part of the case housing the motherboard and drives, and the monitor... \\
High \#2 & 5333.9 &
EU seeks revamp budget pact The European Commission on Friday suggested ways of revamping the EU's Stability and Growth Pact and said the proposed changes would restore the credibility of its much-flouted budget discipline rules... \\
\midrule
Low \#1  & 542.7 &
UN warns of Sudan refugee exodus Some 30,000 Sudanese refugees might cross into Chad to escape persecution by Arab militia, the UN warns... \\
Low \#2 & 551.4 &
Philadelphia Considers Wireless Internet for All By DAVID B. CARUSO PHILADELPHIA (AP) -- For about \$10 million, city officials believe they can turn all 135 square miles of Philadelphia into the world's largest wireless Internet hot spot... \\
\bottomrule
\end{tabular}
\caption{Example texts corresponding to representative GNQ checkpoint trajectories in Fig.~\ref{fig:gnq_ckpt_trajectories}.
We report the two trajectories with the largest average GNQ among the plotted examples (``High''), and two low-GNQ trajectories selected from the remainder by smallest average GNQ (``Low'').}
\label{tab:gnq_traj_texts}
\end{table*}

\begin{figure}[t]
  \centering
  \includegraphics[width=\linewidth]{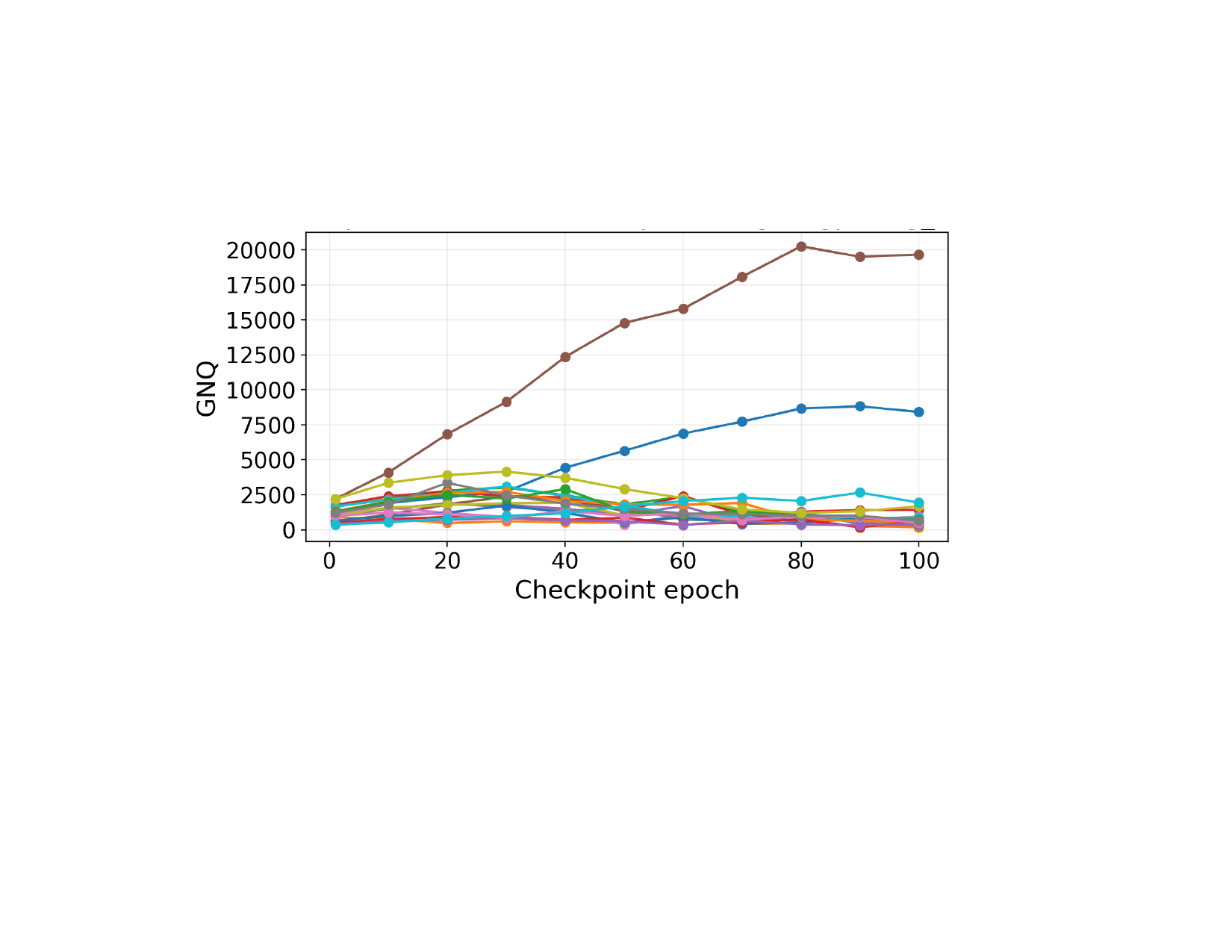}
  \caption{GNQ trajectories across checkpoints for 20 randomly chosen AG-News training examples.}
  \label{fig:gnq_ckpt_trajectories}
\end{figure}

\begin{figure}[t]
  \centering
  \includegraphics[width=\linewidth]{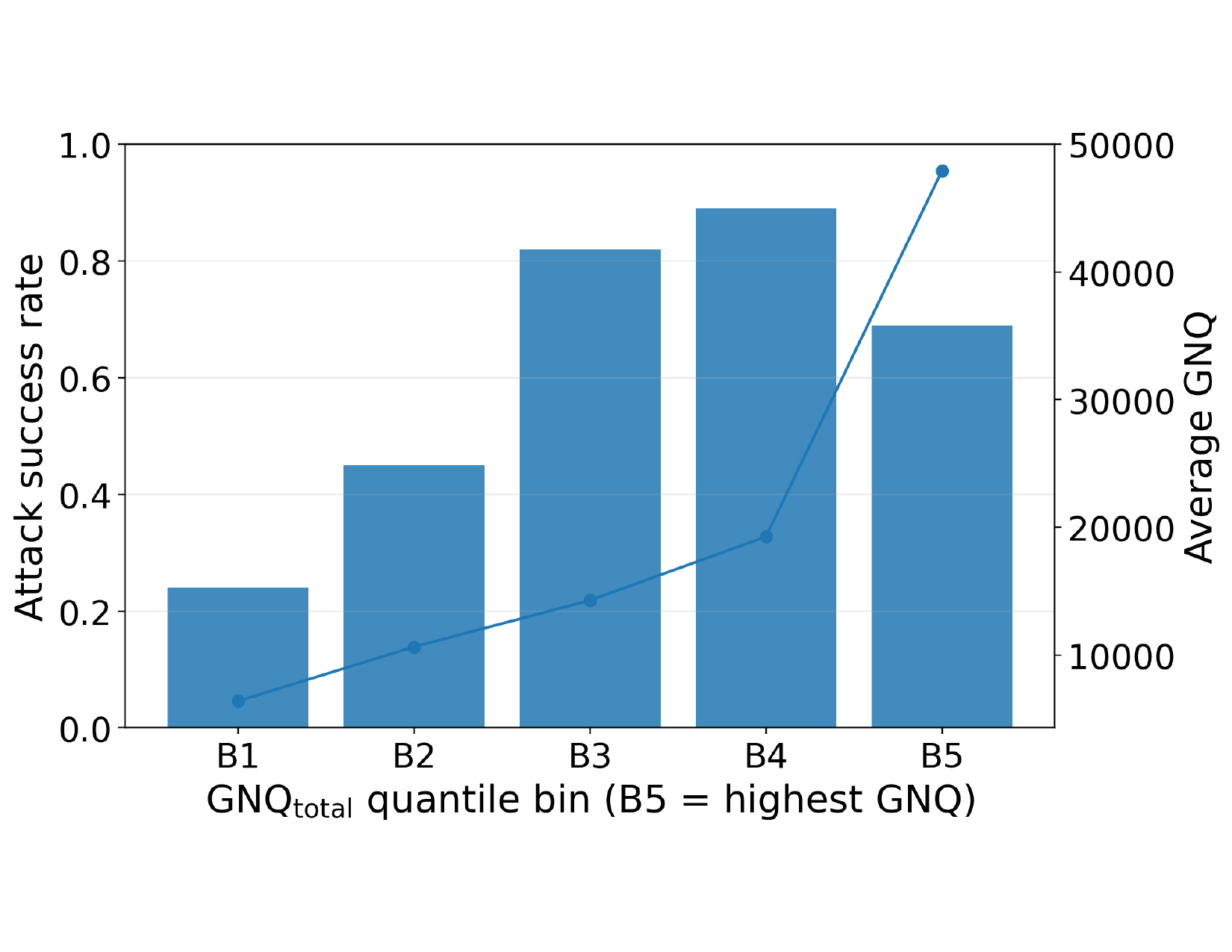}
  \caption{Prefix-completion extraction success rate vs. $\mathrm{GNQ}_{\mathrm{total}}$ quantile bin on AG-News, using the final checkpoint (epoch 100) with $p=45$ prefix tokens and continuation match threshold $\ge 0.98$. Bars correspond to attack success rate, the line shows average GNQ.}
  \label{fig:gnq_training_attack_bins}
\end{figure}

\section{Conclusion}

In this paper, we presented Gradient Uniqueness (GNQ), a principled and attack-agnostic metric designed to quantify the amount of information a machine learning model discloses about individual training datapoints. GNQ inherently accounts for prior expectations, assigning lower risk scores to unsurprising common knowledge and higher scores to anomalous or surprising data. 

To overcome the severe computational bottlenecks of computing GNQ for modern LLMs---which would naively require forming and inverting massive $P \times P$ parameter-space matrices---we introduced the Batch-Space Ghost GNQ (BS-Ghost GNQ) algorithm. By performing computations entirely in a $B \times B$ batch-space and leveraging ghost kernels to avoid explicitly materializing per-example gradients, BS-Ghost GNQ enables in-run privacy auditing with only a minimal computational overhead.

Our empirical evaluations validate both the efficiency and the strong predictive power of this approach. We demonstrated that GNQ effectively distinguishes between well-known facts and false assertions during training, capturing the true ``surprise'' of a datapoint much more clearly than a counterfactual memorization baseline. Furthermore, GNQ serves as a highly accurate predictor of text extractability; sequences with the highest GNQ scores were vulnerable to targeted prefix-completion attacks. Finally, by tracking GNQ trajectories across 100 epochs of LLM training, we showed that disclosure risk is heterogeneous and concentrates heavily on specific examples as training progresses.

\appendix
\section{GNQ: Upper-Bounding Disclosure}
\label{sec:theory}

\subsection{Overview}

We show that under common practical assumptions, GNQ
rigorously quantifies the risk of the learned model embodied by $\theta_{N_r}$ being used to determine whether a given data point appeared in the training set $\mathcal{D}_t$.
Our argument is grounded in information theory.  Our result shows that, using a reasonable application of Jaynes' maximum entropy principle in our analysis, GNQ upper-bounds the mutual information between $T_j$ and $\theta_{N_r}$. It is well-known that if the mutual information between a target variable, say $T_j$, and an observable variable is small, then no algorithm can predict $T_j$ with a success probability significantly better than random guessing.  Thus, this result provides a firm theoretical justification for using GNQ
as a measure for the risk of the learned model disclosing information about a training point.

\vspace{5 pt}
\noindent
\textbf{Problem setup.} Recall from Section~\ref{sec:gnq} that $\mathcal{D}_t$ is used to train some arbitrary model using the classical, mini-batch gradient descent (Alg.~\ref{alg:new_sgd}). Recall that for each datapoint $d_j \in \mathcal{D}$,  
$T_j \sim \text{Ber}(N_t/N)$ controls whether $d_j$ is included in $\mathcal{D}_t$.
For training batch $i$, let $M_{i,j}\sim\text{Ber}(B/N_t)$ be a random variable 
controlling whether $d_j$ is included in the mini-batch. In particular, we include 
the datapoint $d_j$ in the batch if and only if $T_j \cdot M_{i,j} = 1$. 
All these Bernoulli random variables are drawn independently. 

We denote the gradient of the loss function at iteration $i$, evaluated at 
datapoint $d_j$, by 
\[
g_{i,j} := \nabla_{\theta} \ell(\theta_i, d_j).
\] 
The mini-batch gradient is defined as the average of these gradients, normalized 
by the batch size $B$:
\[
\hat{g}_i := \frac{1}{B} \sum_{j=1}^{N} T_j \cdot M_{i,j} \cdot g_{i,j} 
=  \frac{1}{B} \sum_{j=1}^{N} T_j \cdot M_{i,j} \cdot 
\nabla_{\theta}\ell(\theta_i, d_j).
\]
The SGD algorithm runs in $N_r$ iterations with the following update rule:
\[
\theta_{i+1} = \theta_i - \eta \cdot \hat{g}_i,
\] 
where $\eta$ is the learning rate. 

\vspace{5 pt}
\noindent
\textbf{Assuming normality.}
Gradients are empirically observed to resemble Gaussian distributions~\citep{millidge_winsor_2023_basicfacts,panigrahi2019nongaussianitystochasticgradientnoise}.  Further, for a given mean and covariance, the multivariate Gaussian distribution is the maximum entropy distribution. Jaynes’ maximum-entropy principle~\citep{jaynes1957information, jaynes2003probability} states that when only certain constraints are known, it is appropriate to represent remaining unknowns by the distribution with the largest entropy fitting those constraints. Thus, we assume that the entropy of the batch gradients can be approximated by that of a multivariate normal distribution with the same covariance structure.

One challenge when applying the maximum-entropy principle here is that while we can compute the mean vector and covariance matrix for the batch gradients, the covariance matrix $\Sigma$ may not be invertible, rendering it incompatible with the Gaussian assumption. The standard method to ensure invertibility of any covariance matrix is to add a small constant value to each entry along the diagonal (this is known as diagonal loading or Tikhonov regularization).  Thus, when applying the maximum-entropy principle, we assume the entropy can be computed via a Gaussian approximation with covariance matrix $\tilde{\Sigma} = \Sigma + \lambda I$ for $\lambda > 0$. Subsequently, applying a tilde to a matrix signifies the addition of $\lambda I$ to the matrix, resulting in a \emph{regularized} version of the matrix.

\subsection{GNQ and Information Disclosure}
We begin by decomposing the mutual information  $I[T_j d_j;\theta_{N_r}]$ and relate it to the entropy of the mini-batch gradients using information-theoretic tools. In particular, we will rely on the following theorem. The proof of this theorem is provided in Sec.~\ref{sec:proofThm:MI-Entropy}.

\begin{theorem}
\label{thm:MI-Entropy}
In mini-batch SGD (Alg.~\ref{alg:new_sgd}), for any datapoint $d_j \in \mathcal{D}$, 
the mutual information $I[T_j;\theta_{N_r}]$ is bounded by:
\begin{equation}
\label{eq:thm:MI-Entropy}
\begin{split}
& I\left[T_j;  \theta_{N_r}\right] \\ 
& \quad \quad \leq \sum_{i=1}^{N_r-1} H[\hat{g}_i \mid \theta_{i}] - H[\hat{g}_i \mid \theta_{i} , T_j = 0] \\
& \quad \quad \quad \quad \quad - \frac{N_t}{N} \cdot \Big(H[\hat{g}_i \mid \theta_{i} , T_j = 1] - H[\hat{g}_i \mid \theta_{i} , T_j = 0]\Big) 
\,.
\end{split}
\end{equation} 
\end{theorem}

Now we apply Theorem 1 to complete our upper-bound argument. Given our normality assumption, if the covariance matrix of $\hat{g}_i$ 
conditioned on $\theta_i$ is $\Sigma$, it is the case that: 
\begin{equation}
    \label{as:normal}
    H[\hat{g}_i\mid \theta_i] \approx \tfrac{1}{2}\log\!\big((2\pi e)^{P} \,\text{det}(\tilde{\Sigma})\big)\,,
\end{equation}
Similarly, we define 
$\Sigma^{(j,0)}$ (resp.\ $\Sigma^{(j,1)}$) as the covariance matrix of 
$\hat{g}_i$ conditioned on $\theta_i$ and $T_j = 0$ 
(resp.\ $T_j = 1$) so $H[\hat{g}_i\mid \theta_i, T_j = 0]$ is a function of $\tilde{\Sigma}^{(j,0)}$ and $H[\hat{g}_i\mid \theta_i, T_j = 1]$ is a function of $\tilde{\Sigma}^{(j,1)}$. We derive the un-regularized covariance matrices explicitly in Sec.~\ref{sec:covariances}.  As we show, the matrices are given by:
\begin{align*}
    \Sigma & = \frac{1}{B\,N} \cdot \left(1-\frac{B}{N}\right)\,
\sum_{j=1}^{N}  g_{i,j} \, g_{i,j}^{\top}
\\ \Sigma^{(j,0)} & = \frac{1}{B\,N} \cdot \left(1-\frac{B}{N}\right)\,
\sum_{j'\neq j}  g_{i,j'} \, g_{i,j'}^{\top}
\\ \Sigma^{(j,1)} & = \frac{1}{BN_t} \cdot \left(1- \frac{B}{N_t}\right)\cdot g_{i,j} \, g_{i,j}^{\top} + \frac{1}{B\,N} \cdot \left(1-\frac{B}{N}\right)\,
\sum_{j'\neq j}  g_{i,j'} \, g_{i,j'}^{\top}
\,.
\end{align*}
It is not hard to see that the regularized  $\tilde{\Sigma}$ and $\tilde{\Sigma}^{(j,1)}$ are rank-one perturbations of the regularized $\tilde{\Sigma}^{(j,0)}$: 
\begin{align*}
    \tilde{\Sigma} & = \tilde{\Sigma}^{(j,0)} +  \underbrace{\frac{1}{B\,N} \cdot \left(1-\frac{B}{N}\right)}_{c_1^2 :=}\,
\cdot g_{i,j} \, g_{i,j}^{\top}
\\ \tilde{\Sigma}^{(j,1)} & = \tilde{\Sigma}^{(j,0)} +  \underbrace{\frac{1}{BN_t} \cdot \left(1- \frac{B}{N_t}\right)}_{c_2^2 :=}\cdot g_{i,j} \, g_{i,j}^{\top} 
\,.
\end{align*}
Substituting these covariance matrices back in  Equation~\ref{eq:thm:MI-Entropy} and using these rank-one perturbations, we get the following bound via Theorem~\ref{thm:MI-Entropy}:
\begin{align}
& I\left[T_j;  \theta_{N_r}\right] \notag \\ 
& \quad  \leq \sum_{i=1}^{N_r-1} H[\hat{g}_i \mid \theta_{i}] - H[\hat{g}_i \mid \theta_{i} , T_j = 0] \notag\\
&  \quad \quad \quad \quad - \frac{N_t}{N} \cdot \Big(H[\hat{g}_i \mid \theta_{i} , T_j = d_j] - H[\hat{g}_i \mid \theta_{i} , T_j = 0]\Big) 
\notag \\
& \quad \approx \sum_{i=1}^{N_r-1} \frac{1}{2} \left( \log \left( \frac{\text{det}\left(\tilde{\Sigma}\right)}{\text{det}\left(\tilde{\Sigma}^{(j,0)}\right)}\right) - \frac{N_t}{N} \log \left( \frac{\text{det}\left(\tilde{\Sigma}^{(j,1)}\right)}{\text{det}\left(\tilde{\Sigma}^{(j,0)}\right)}\right) \right)
\notag \\ 
&  \quad \overset{J}{=} \sum_{i=1}^{N_r-1}   \frac{1}{2}\log \left( 1 + c_1^2 g_{i,j}^{\top}\left(\tilde{\Sigma}^{(j,0)}\right)^{-1}g_{i,j}\right) - \notag \\ &  \quad \quad \quad \quad \frac{1}{2} \frac{N_t}{N} \log \left( 1 + c_2^2 g_{i,j}^{\top}\left(\tilde{\Sigma}^{(j,0)}\right)^{-1}g_{i,j}\right)
\notag \\
&  \quad = 
\sum_{i=1}^{N_r-1} \frac{1}{2} \left(
\log \left(
\frac{1 + c_1^2 g_{i,j}^{\top}\left(\tilde{\Sigma}^{(j,0)}\right)^{-1}g_{i,j}}{\left(1 + c_2^2 g_{i,j}^{\top}\left(\tilde{\Sigma}^{(j,0)}\right)^{-1}g_{i,j}\right)^{N_t/N}}
\right)
\right), \label{equ:closed-expression}
\end{align}
where step ($J$) follows by applying the matrix determinant lemma:
\[
\det(A + uv^{\top}) = (1+ v^{\top} A^{-1} u) \det(A),
\]
where $A$ is an invertible square matrix and $u$, $v$ are column vectors.

Assuming that $N_t \geq N/2$, what we have in Equ.~\eqref{equ:closed-expression} is an increasing  function of $x := g_{i,j}^{\top}\left(\tilde{\Sigma}^{(j,0)}\right)^{-1}g_{i,j}$. We prove this by showing that 
$$f(x) := \frac{1 + c_1^2\,x}{\sqrt{1 + c_2^2 x}}  $$
is an increasing function of $x$ in Lemma~\ref{lem:f-monotone} for our desired range of parameters. This completes the argument that gradient uniqueness upper bounds information disclosure:
as $g_{i,j}^{\top}\!\left(\tilde{\Sigma}^{(j,0)}\right)^{-1}\!g_{i,j}$ increases, the bound increases, meaning the maximum possible information about $T_j$ that can be known by the learned model $\theta_{N_r}$ also increases.

\subsection{Proof of Theorem A.1}
\label{sec:proofThm:MI-Entropy}
The following is the proof of Theorem~\ref{thm:MI-Entropy}
\begin{proof}
Note that using the SGD update rule, the final model $\theta_{N_r}$ is obtained from the second to last model $\theta_{N_r-1}$ together with the gradient in the last batch $\hat{g}_{N_r-1}$. By the data processing inequality, we obtain:
\begin{align*}
 I[T_j;\theta_{N_r}]
     &\leq I[T_j;\theta_{N_r-1}, \hat{g}_{N_r-1}]\\
     & = I[T_j;\theta_{N_r-1}] + I[T_j;\hat{g}_{N_r-1} \mid \theta_{N_r-1}] \tag{via chain rule}
\end{align*}
By recursively applying this decomposition from step $N_r$ down to step $0$, we obtain: 
\begin{align*}
   I\left[T_j;  \theta_{N_r}\right]  &\leq I[T_j;\theta_0] + \sum_{i=1}^{N_r-1} I[T_j;\hat{g}_i \mid \theta_{i}]\,.
\end{align*}
Since the initial model $\theta_0$ is chosen independently of the training data, we have $I[T_j;\theta_0] = 0$. Hence, we obtain:
\begin{equation}
    \label{eq:chain}
   I\left[T_j;  \theta_{N_r}\right]  \leq  \sum_{i=1}^{N_r-1} I[T_j;\hat{g}_i \mid \theta_{i}]\,.
\end{equation}
We now analyze each term in the above summation. By definition of mutual information and the conditional entropy, we have
\begin{align}
    I[T_j;\hat{g}_i \mid \theta_{i}] &= H[\hat{g}_i \mid \theta_{i}] - H[\hat{g}_i \mid \theta_{i} , T_j] \notag
    \\ & = H[\hat{g}_i \mid \theta_{i}] - \mathbb{P}[T_j = 0] \cdot H[\hat{g}_i \mid \theta_{i} , T_j = 0] \notag
    \\ & \quad - \mathbb{P}[T_j = d_j] \cdot H[\hat{g}_i \mid \theta_{i} , T_j = d_j] \notag
    \\ &= H[\hat{g}_i \mid \theta_{i}] - \left[1 - \frac{N_t}{N}\right] \cdot H[\hat{g}_i \mid \theta_{i} , T_j = 0]\notag 
    \\ & \quad - \left[\frac{N_t}{N}\right] \cdot H[\hat{g}_i \mid \theta_{i} , T_j = d_j] \notag
    \\&= H[\hat{g}_i \mid \theta_{i}] - H[\hat{g}_i \mid \theta_{i} , T_j = 0] \notag
    \\ & \quad - \frac{N_t}{N} \cdot \Big(H[\hat{g}_i \mid \theta_{i} , T_j = d_j] - H[\hat{g}_i \mid \theta_{i} , T_j = 0]\Big)
    \label{equ:CondMI}
\end{align}
Here we used that $T_j$ is a Bernoulli random variable with parameter $N_t/N$. 
Substituting Equation~\eqref{equ:CondMI} into Equation~\eqref{eq:chain} yields the stated bound, completing the proof.
\end{proof}

\subsection{Proof of Montotonicity}

\begin{lemma}
\label{lem:f-monotone}
Let $c_1,c_2 > 0$ and, for $x\ge 0$, define
\[
f(x)\;:=\;\frac{1+c_1^{2}x}{\sqrt{1+c_2^{2}x}}.
\]
If $2c_1^{2} > c_2^{2}$, then $f$ is an increasing function of $x$ on $[0,\infty)$.
\end{lemma}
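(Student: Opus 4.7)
The plan is a one-line calculus verification: show $f'(x) > 0$ for all $x \ge 0$. Since $f$ is smooth and strictly positive on $[0,\infty)$, the sign of $f'(x)$ coincides with the sign of $(\log f)'(x)$, so I would work with the logarithmic derivative rather than differentiating the quotient directly; this sidesteps the $(1+c_2^2 x)^{-3/2}$ clutter produced by the chain/quotient rules.

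Writing $\log f(x) = \log(1+c_1^2 x) - \tfrac{1}{2}\log(1+c_2^2 x)$ and differentiating gives
\[
(\log f)'(x) \;=\; \frac{c_1^{\,2}}{1+c_1^2 x} \;-\; \frac{c_2^{\,2}}{2\,(1+c_2^2 x)} .
\]
Clearing the two (positive) denominators, this has the same sign as
\[
2c_1^{\,2}\,(1+c_2^2 x)\;-\;c_2^{\,2}\,(1+c_1^2 x)\;=\;(2c_1^{\,2} - c_2^{\,2})\;+\;c_1^{\,2}c_2^{\,2}\, x ,
\]
since the cross terms collapse neatly. Under the hypothesis $2c_1^{\,2} > c_2^{\,2}$, the constant term is strictly positive, and the linear coefficient $c_1^{\,2}c_2^{\,2}$ is positive as well, so the affine expression is strictly positive throughout $[0,\infty)$. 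Therefore $(\log f)'(x) > 0$, hence $f'(x) > 0$, and $f$ is strictly increasing on $[0,\infty)$.

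There is no genuine obstacle here; the only point requiring attention is the factor $\tfrac{1}{2}$ coming from the square root, which is exactly what produces the asymmetric hypothesis $2c_1^{\,2} > c_2^{\,2}$ rather than $c_1^{\,2} > c_2^{\,2}$. The hypothesis is moreover sharp at the origin: one may observe that $f'(0) = \tfrac{1}{2}(2c_1^{\,2} - c_2^{\,2})$, so if $2c_1^{\,2} \le c_2^{\,2}$ then $f$ is not increasing near $x=0$ (it may still eventually become increasing since the linear term $c_1^{\,2}c_2^{\,2} x$ dominates for large $x$, but not globally on $[0,\infty)$). In the application above, the relevant constants satisfy $c_1^{\,2} = (1/BN)(1-B/N)$ and $c_2^{\,2} = (1/BN_t)(1-B/N_t)$ with $N_t = N/2$, and a quick substitution confirms $2c_1^{\,2} > c_2^{\,2}$, so the lemma applies directly.
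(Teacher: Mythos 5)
Your proof is correct and takes essentially the same route as the paper's: the paper differentiates $f$ directly via the product rule and arrives at $f'(x)=\bigl((2c_1^{2}-c_2^{2})+c_1^{2}c_2^{2}x\bigr)/\bigl(2(1+c_2^{2}x)^{3/2}\bigr)$, which has exactly the affine numerator you obtain from the logarithmic derivative. Your added remarks on sharpness at $x=0$ and on verifying $2c_1^{2}>c_2^{2}$ for the application's constants are correct but not part of the paper's argument.
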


\begin{proof}
For $x\ge 0$,
\[
f(x)=(1+c_1^{2}x)\,(1+c_2^{2}x)^{-1/2},
\]
so by the product/chain rules,
\begin{align}
f'(x)
&= c_1^{2}(1+c_2^{2}x)^{-1/2}
-\tfrac{1}{2}(1+c_1^{2}x)c_2^{2}(1+c_2^{2}x)^{-3/2} \nonumber \\
&=\frac{\bigl(2c_1^{2}-c_2^{2}\bigr)+c_1^{2}c_2^{2}\,x}{2(1+c_2^{2}x)^{3/2}}. \nonumber
\end{align}
The denominator is positive for all $x\ge 0$. Under $2c_1^{2} > c_2^{2}$ the numerator is
positive for all $x\ge 0$. Hence $f'(x) > 0$ on $[0,\infty)$, so $f$ is an increasing function of $x$ on this interval.
\end{proof}

\subsection{Derivation of Covariance Matrices}
\label{sec:covariances}
Recall that we define $\Sigma$ to be the covariance matrix of $\hat{g}_i$ conditioned on $\theta_i$. 
Our derivation begins by calculating the $(a,b)$ entry of the $\Sigma$, and  expanding it. In the following, we use $g_{i,j}^{(a)}$ to denote the $a$th coordinate of the vector $g_{i,j}$. 
$$ \Sigma_{ab} = \frac{1}{B^2} \cdot \text{cov}\left(\sum_{j=1}^{N} T_j \, M_{i,j} \, g_{i,j}^{(a)}, \sum_{j'=1}^{N} T_{j'} \, M_{i,j'} \, g_{i,j'}^{(b)}\right) $$

By the definition of covariance, this can be written as:
\begin{align*}
\Sigma_{ab} &= 
\frac{1}{B^2} \cdot
E\left[\left(\sum_{j=1}^{N} T_j \, M_{i,j} \, g_{i,j}^{(a)}\right) \left(\sum_{j'=1}^{N} T_{j'} \, M_{i,j'} g_{i,j'}^{(b)}\right)\right] \\ 
& - \frac{1}{B^2} \cdot E\left[\sum_{j=1}^{N} T_j \, M_{i,j} \, g_{i,j}^{(a)}\right] E\left[\sum_{j'=1}^{N} T_{j'} \, M_{i,j'} g_{i,j'}^{(b)}\right]
\,.
\end{align*} 
Rearranging the sums and expectations gives:
$$ \Sigma_{ab} = \frac{1}{B^2} \cdot \sum_{j, j'=1}^{N} \left(E[T_j \, M_{i,j} T_{j'} \, M_{i,j'}] - E[T_j \, M_{i,j}]E[T_{j'} \, M_{i,j'}]\right) g_{i,j}^{(a)} \, g_{i,j}^{(b)} $$
This simplifies to a sum of covariances:
$$ \Sigma_{ab} = \frac{1}{B^2} \cdot \sum_{j, j'=1}^{N} \text{cov}(T_j \, M_{i,j}, T_{j'} \, M_{i,j'}) g_{i,j}^{(a)} \, g_{i,j'}^{(b)} $$
Given that for $j \neq j'$, the terms $T_j \, M_{i,j}$ and $T_{j'} \, M_{i,j'}$ are independent, their covariance is zero. This eliminates the terms where $j \neq j'$, leaving only the terms where $j = j'$:
\begin{align*}
    \Sigma_{ab} & = \frac{1}{B^2} \cdot 
\sum_{j=1}^{N} \text{cov}(T_j \, M_{i,j}, T_j \, M_{i,j}) g_{i,j}^{(a)} \, g_{i,j}^{(b)}  
\\ &= \frac{1}{B^2} \cdot 
\sum_{j=1}^{N} \text{var}(T_j \, M_{i,j}) g_{i,j}^{(a)} \, g_{i,j}^{(b)}
\\ & = 
\frac{1}{B^2} \cdot 
\sum_{j=1}^{N} \frac{B}{N}\cdot \left(1-\frac{B}{N}\right)\, g_{i,j}^{(a)} \, g_{i,j}^{(b)}
\\ & = 
\frac{1}{B\,N} \cdot \left(1-\frac{B}{N}\right)\,
\sum_{j=1}^{N}  g_{i,j}^{(a)} \, g_{i,j}^{(b)}\,.
\end{align*}
In the above calculation, we use that $T_j \,M_{i,j}$ can be viewed as a Bernoulli random variable with parameter $\frac{N_t}{N}\cdot\frac{B}{N_t} = \frac{B}{N}$. Writing $\Sigma$ in the matrix form give us:
\begin{equation}\label{eq:sigma}
    \Sigma = \frac{1}{B\,N} \cdot \left(1-\frac{B}{N}\right)\,
\sum_{j=1}^{N}  g_{i,j} \, g_{i,j}^{\top}\,.
\end{equation}
Next, we compute $\Sigma^{(j,0)}$ and $\Sigma^{(j,1)}$ similarly for a fixed $j$. Condition on $T_j = 0$, $T_j \cdot M_{i,j}$ is always zero, so the variance is zero. Thus, we have: 
\begin{equation}\label{eq:sigma_0}
    \Sigma^{(j,0)} = \frac{1}{B\,N} \cdot \left(1-\frac{B}{N}\right)\,
\sum_{j'\neq j}  g_{i,j'} \, g_{i,j'}^{\top}\,.
\end{equation}
Condition on $T_j$ being one, $T_j \cdot M_{i,j}$ is equal to $M_{i,j}$, a Bernoulli random variable with variance $(B/N_t)\cdot(1-B/N_t)$. Thus, we have:
\begin{equation}\label{eq:sigma_1}
    \Sigma^{(j,1)} = \frac{1}{BN_t} \cdot \left(1- \frac{B}{N_t}\right)\cdot g_{i,j} \, g_{i,j}^{\top} + \frac{1}{B\,N} \cdot \left(1-\frac{B}{N}\right)\,
\sum_{j'\neq j}  g_{i,j'} \, g_{i,j'}^{\top}\,.
\end{equation}

\bibliographystyle{ACM-Reference-Format}
\bibliography{my}

\end{document}